\title{
A
Kernelised
Stein Statistic for 
Assessing \\
Implicit 
Generative Models
}
\author{%
  Wenkai Xu\\      Department of Statistics \\University of Oxford \\
  \texttt{wenkai.xu@stats.ox.ac.uk} \\
  \And
  Gesine Reinert \\      Department of Statistics \\University of Oxford \\
  \texttt{reinert@stats.ox.ac.uk} \\
}
\theoremstyle{plain}
\newtheorem{theorem}{Theorem}[section]
\newtheorem{proposition}[theorem]{Proposition}
\newtheorem{lemma}[theorem]{Lemma}
\theoremstyle{definition}
\newtheorem{example}[theorem]{Example}
\theoremstyle{remark}
\newtheorem{remark}[theorem]{Remark}
\newcommand{\R}{\mathbb{R}} 
\renewcommand{\H}{\mathcal{H}} 
\newcommand{\X}{\mathcal{X}} 
\newcommand{\x}{\boldsymbol{x}} 
\newcommand{\B}{\mathcal{B}} 
\newcommand{\A}{\mathcal{A}} 
\renewcommand{\P}{\mathbb{P}} 
\newcommand{\E}{\mathbb{E}} 
\newcommand{\T}{\mathcal{A}}
\newcommand{\f}{\mathbf{f}}
\newcommand{\g}{\mathbf{g}}
\newcommand{\s}{\mathbf{s}}
\newcommand{\grad}{\nabla} 
\newcommand{\z}{{\boldsymbol{z}}}
\newcommand{\bX}{{\boldsymbol{X}}}
\newcommand{\y}{{\boldsymbol{y}}}
\newcommand{\KSD}{{\operatorname{KSD}}}
\newcommand{\NPKSD}{{\operatorname{NP-KSD}}}
\newcommand{\MMD}{{\operatorname{MMD}}}
\newcommand{\wk}[1]{\textcolor{black}{#1}}
\newcommand{\gr}[1]{\textcolor{black}{#1}}
\begin{document}

\maketitle

\begin{abstract}
{Synthetic data generation has become a key ingredient for training machine learning procedures,}
addressing tasks such as data augmentation, analysing privacy-sensitive data, 
or
visualising representative samples.
{Assessing the quality of such synthetic data generators hence has to be addressed. As} 
(deep) generative models {for synthetic data} often do not admit explicit probability distributions, {classical}
statistical procedures for assessing model goodness-of-fit
may not {be applicable}.
In this paper, we propose a principled procedure to assess the quality of {a synthetic} data generator. {The procedure is a kernelised Stein discrepancy (KSD)-type test which is based on a non-parametric Stein operator for the synthetic data generator of interest. This operator is estimated from samples which are obtained from the synthetic data generator and hence can be applied even when the model is only implicit.} 
{In contrast to classical testing, the sample size from the synthetic data generator can be as large as desired, while the size of the observed data which the generator aims to emulate is fixed.} 
Experimental results on synthetic distributions and trained generative models on {synthetic and} real datasets illustrate {that the method shows} improved power performance compared to existing approaches. 
\end{abstract}



\section{Introduction}\label{sec:intro}
Synthetic data capturing main features of the original dataset are of particular interest for machine learning {methods}. 
The use of original dataset for machine learning tasks can be problematic or even prohibitive in certain scenarios, e.g. 
under authority regularisation
on privacy-sensitive information, training models
on small-sample dataset, 
{or} calibrating models with imbalanced groups.
High quality synthetic data generation procedures surpass some of these challenges by creating de-identified data 
to preserve privacy and {to} augment small or imbalance datasets.
Training deep generative models has been widely studied in the recent years \citep{kingma2013auto, radford2015unsupervised, song2021train} and
methods such as those based on Generative Adversarial Networks (GANs) \citep{goodfellow2014generative} provide
powerful approaches that learn to generate synthetic data {which}  resemble the original data distributions. However, {these  deep generative models  usually do not provide theoretical guarantees on the} \emph{goodness-of-fit} to the original data 
\citep{creswell2018generative}.

To the best of our knowledge, existing mainstream development{s} for deep generative models \citep{song2020improved, li2017mmd} do not provide {a} systematic approach to assess the quality of the synthetic samples. Instead, heuristic methods {are applied}, e.g. for image data, the quality of samples are generally decided via visual comparisons. The training quality has been studied relying largely on the specific choice of training loss, which does not directly {translate into a measure of sample quality; in the case of the log-likelihood} \citep{theis2015note}.
Common quality assessment measures for implicit generative models,
on images for example,
include
Inception Scores (IS) \citep{salimans2016improved} and
Fréchet Inception Distance (FID) \citep{heusel2017gans}, which are motivated by human inception systems in the visual cortex and pooling \citep{wang2004image}.
\citet{binkowski2018demystifying} pointed out issues for IS and FID and developed {the} Kernel Inception Distance (KID) for more general datasets. 
{Although} 
these scores {can be used for} 
for comparisons, {they do not provide a statistical significance test which would assess}
whether a {deemed} \textit{good} generative model is 
``\textit{good enough}''. 
{A key stumbling block is that the distribution from which a synthetic method generates samples is not available; one only ever observes samples from it.} 

{For models in which the density is known explicitly, at least up to a normalising constant, some assessment methods are available.}  
\cite{gorham2017measuring} 
proposed to assess sample quality using discrepancy measures 
called {\it kernelised Stein discrepancy }(KSD). 
{\citet{schrab2022ksd} assesses} the quality
{of} generative models on {the}  MNIST image dataset {from} \citet{lecun1995learning} using {an} aggregated kernel Stein discrepancy (KSDAgg) test{;} 
{still an} explicit density is 
required.
{The only available implicit goodness-of-fit test, AgraSSt \citep{xu2022agrasst}, applies only to generators of {finite} graphs; 
it is also of KSD form and makes extensive use of the discrete and finite nature of the problem. To date,} 
quality assessment procedures {of}
\textit{implicit} deep generative models {for continuous data} remains {unresolved}. {This paper provides a solution of this problem.} 

{The underlying idea can be sketched as follows.} 
{Traditionally, given a set of $n$ observations, {each in $\R^m$}, one would estimate the distribution of these observations from the data and then check whether the synthetic data can be viewed as coming from the data distribution. Here {instead} we 
characterise the distribution which is generated possibly implicitly from the synthetic data generator, and then test whether the observed data can be viewed as coming from the synthetic data distribution. The advantage of this approach is that while the observed sample size $n$ may be fairly small, the synthetic data distribution can be estimated to any desirable level of accuracy by generating a large number of samples. {Similarly to the works mentioned in the previous paragraph 
for goodness-of-fit tests,} 
we
use a KSD approach, based on a Stein operator which characterises the synthetic data distribution. As the synthetic data generator is usually implicit, this Stein operator is not available. We show however that it can be estimated from synthetic data samples to any desired level of accuracy. }

\paragraph{Our contributions}
{W}e {introduce} a method 
to assess 
(deep) generative models, {which are often \textit{black-box}} 
{approaches},  when the underlying probability distribution is continuous, usually in high-dimensions.
To this purpose, we develop a non-parametric Stein operator and the corresponding non-parametric kernel Stein discrepancies (NP-KSD), based on estimating conditional score functions. Moreover, we {give theoretical} 
guarantees for NP-KSD.
%

{This paper is structured as follows.} 
We start {with} a review {of} Stein's method and KSD {goodness-of-fit} tests 
for explicit models in \cref{sec:stein_review} before we introduce the NP-KSD in \cref{sec:np-ksd} {and}
analyse the model assessment procedures. {We}
show {results of} experiments 
in \cref{sec:exp} and conclude with future directions in \cref{sec:conclusion}. {Theoretical underpinnings, and additional results are provided in the supplementary material.
}
The code 
is available at \url{https://github.com/wenkaixl/npksd.git}.

\section{Stein's method and kernel Stein discrepancy tests}\label{sec:stein_review}

\paragraph{{{Stein identities, equations, and operators}}} Stein's method \citep{stein1972bound} provides an elegant tool to characterise distributions via \emph{Stein operators}, which {can be used to assess distances between probability distributions} 
\citep{barbour2005introduction, barbour2005multivariate, barbour2018multivariate}.
Given a distribution $q$, an operator
$\A_q$ is called a Stein operator w.r.t. $q$ {and {\it Stein class} $\mathcal{F}$} if the following {Stein} identity holds for any \emph{test function} $f{\in \mathcal{F}}$: 
$    {\E}_q [\mathcal{A}_q {f}]=0.$
{For a test function $h$ one then aims to find a function $f = f_h {\in \mathcal{F}}$ which solves the {\it Stein equation}}
\begin{equation}\label{eq:stein_equation}
    \T_q f (\x) = h(\x) - \E_q [h(\x)].
\end{equation}
{Then for any distribution $p$, taking expectations $\E_p$ in Eq.\,\ref{eq:stein_equation} assesses the distance $ | \E_p h - \E_q h|$ through $| \E_p \T_q f|$, an expression in which randomness enters only through the distribution $p$.} 

When the density function {$q$}  is given explicitly, {with smooth support $\Omega_q {\subset \R^m}$,  is differentiable and vanishes at the boundary of $\Omega_q$,} a common choice of Stein operator in the literature utilises the score-function, {see for example \cite{mijoule2021stein}}.  {The gradient operator is denoted by}
$\nabla$
and 
taken to be a column vector. 
%
The \emph{score function} of $q$ is defined as $\s_q = \grad \log q = \frac{\nabla q}{q}$ 
(with the convention that $
\s_q \equiv 0$ outside of $\Omega_q$).
Let $\f = (f_1,\dots, f_{{m}})^\top$ where $f_i: \R^{{m}} \to \R, \forall i,$ are differentiable.
The \emph{score-Stein operator}\footnote{also referred to as Langevin Stein operator \citep{barp2019minimum}.} is the vector-valued operator 
acting on (vector-valued) function $\f$,
\begin{align}
\mathcal{A}_q \f(\x) &= \f(\x)^{\top} \nabla \log q(\x)+ \nabla \cdot \f(\x),
\label{eq:steinRd}
\end{align}
and the {Stein} identity $\E_q [\A_q f] = 0$ {holds for functions $f$ which belong to the so-called {\it canonical Stein class}  defined in \citet{mijoule2021stein}, Definition 3.2.}  
{As it requires knowledge of the density $q$}
only via its score function, this Stein operator is particularly useful for unnormalised densities \citep{hyvarinen2005estimation}, {appearing} e.g. {in} energy based models (EBM) \citep{lecun2006tutorial}. 
%

\paragraph{Kernel Stein discrepancy}
Stein operators can be used to assess discrepancies between two probability distributions;
%
{t}he Stein discrepancy between probability distribution $p$ and $q$ (w.r.t. 
class ${\mathcal B} \subset {\mathcal F}$) 
is defined as \citep{gorham2015measuring}
\begin{equation}
\operatorname{SD}(p\|q, {{\mathcal B }}) =\sup_{f \in \mathcal B}  \{|   \mathbb{E}_{p}[{\T}_q f] -\underset{=0}{\underbrace{\mathbb{E}_{p}[\mathcal{A}_p f]}} |  \} = \sup_{f \in \mathcal B} {|}   \mathbb{E}_{p}[{\T}_q f] {|}.
\label{eq:sd}
\end{equation} 
{As} the $\sup f$ {over a} general class $\mathcal B$ can be difficult to compute, 
{taking $\mathcal B$ as the unit ball of a} reproducing kernel Hilbert space (RKHS) has been considered, {resulting in the} 
{\it kernel Stein discrepancy }(KSD) 
defined as \citep{gorham2017measuring} 
\begin{equation}
\operatorname{KSD}(p\|q, {{\mathcal H }}) =\sup_{f \in \B_1(\mathcal H)} {|}  \mathbb{E}_{p}[{\T}_q f]{|} . 
\label{eq:ksd}
\end{equation}
Denoting {by} $k$  the reproducing kernel associate{d} with {the} RKHS $\H$ over a set $\mathcal X$, {the reproducing property  ensures that} $\forall f \in \H$, $f(\x) = \langle f, k(\x,\cdot) \rangle_{\H}, \forall \x \in \X$.
Algebraic manipulations yield
\begin{align}
\mathrm{KSD}^2(q\|p) = {\E}_{\x,\tilde{\x} \sim p} [u_q(\x,\tilde{\x})], 
\label{eq:KSDequiv}
\end{align}
where $u_q (\x,\tilde{\x})= \langle \T_q k(\x,\cdot), \T_q k(\tilde \x,\cdot)\rangle_{\H}$, which takes the exact $\sup$ without approximation and
does not involve the (sample) distribution $p$. 
{Then,  KSD$^2$ can be estimated through empirical means}, 
over samples from $p$,
e.g. V-statistic \citep{van2000asymptotic} and U-statistics \citep{lee90} estimates are
\begin{equation}\label{eq:u-stats}
{{\operatorname{KSD}}_v^2}(q\|p)=\frac{1}{m^2}\sum_{i \gr{,} j}u_{q}(\x_{i},\x_{j}),
\qquad
{{\operatorname{KSD}}_u^2}(q\|p)=\frac{1}{m(m-1)}\sum_{i\neq j}u_{q}(\x_{i}\,\x_{j}).
\end{equation}
KSD 
has been studied as discrepancy measure 
between distributions
{for} testing model goodness-of-fit \citep{chwialkowski2016kernel, liu2016kernelized}.

%
\paragraph{KSD testing procedure}
Suppose we have observed samples $\x_1,\dots,\x_n$ from the \emph{unknown} distribution $p$. 
To test the null hypothesis $\mathrm{H_0}: p=q$ against the (broad class of) alternative hypothesis $\mathrm{H_1}: p\neq q$,
KSD can be empirically estimated via Eq.\,\ref{eq:u-stats}.
The null distribution is {usually} simulated via the wild-bootstrap procedure \citep{chwialkowski2014wild}. Then {if} the empirical quantile, i.e. the proportion of wild bootstrap samples that are larger than ${\operatorname{KSD}_v^2}(q \| p)$, 
is smaller than the pre-defined test level (or significance level) $\alpha$,  the null hypothesis is rejected; otherwise the null hypothesis is not rejected.
In this way, a systematic non-parametric goodness-of-fit testing  procedure  is obtained, which is applicable to unnormalised models. 
%


\section{Non-Parametric kernel Stein discrepancies}\label{sec:np-ksd}

The construction of {a} KSD relies on the knowledge of the density model, up to normalisation. However, for deep generative models where the density function is not  explicitly known, the computation for Stein operator in Eq.\,\ref{eq:steinRd}, which is based on an explicit parametric density, is no longer feasible.

{While in principle one could estimate the multivariate density function from synthetic data, density estimation in high dimensions is known to be problematic, see for example \cite{scott2005multidimensional}. Instead, Stein's method allows to use a two-step approach: {For data in $\R^m$,} we first pick a coordinate $i \in [m]:= \{1, \ldots, m\}$, and then we characterize the uni-variate conditional distribution of that coordinate, given the values of the other coordinates. Using score Stein operators from \cite{ley2017stein}, this approach only requires knowledge or estimation of uni-variate conditional score functions.
}  

{
{We} denote observed data $\z_1, \ldots, \z_n$ with $\z_i = (z_i^{(1)}, \ldots, z_i^{(m)})^{{\top}} \in \R^m$; and denot{ing}
the generative model as $G$,
we write $\bX \sim G$ to denote a random {$\R^m$-valued element} from the (often only given implicitly) distribution which is underlying $G$.  Using $G$, we generate $N$ samples {denoted by} $\y_1, \ldots, \y_N$.} 
In our case, $n$ is fixed and $n\ll N$, allowing $N \rightarrow \infty$ {in theoretical results}. 
{The kernel of an RKHS is denoted by $k$ and 
is assumed to be bounded.} 
{For $\x \in \R^m$, $x\in\R$ and $g(\x):\mathbb{R}^m \rightarrow\mathbb{R}$,} we write 
$g_{x^{(-i)}}(x): \mathbb{R} \rightarrow\mathbb{R}$ for the uni-variate function which acts only on the coordinate $i$ and fixes the other coordinates {to equal} $x^{(j)}, j \ne i$, so that 
$g_{x^{(-i)}}(x)= g( x^{(1)}, \ldots, x^{(i-1)}, x, x^{(i+1)}, \ldots, x^{(m)}) $.

{For $i \in [m]$ let
$\mathcal{T}^{(i)}$ denote a Stein operator for the conditional distribution $Q^{(i)} = Q^{(i)}_{x^{(-i)}}$ with 
$\E_{Q^{(i)}_{x^{(-i)}}} g_{x^{(-i)}}(x) = \mathbbm{E}[g_{y^{(-i)}}(Y)| Y^{(j)} = y^{(j)}, j \ne i] $.
} 
The {proposed} Stein operator $\mathcal{A} $ acting on functions $g: \mathbb{R}^m \rightarrow\mathbb{R}$ 
underlying the non-parametric Stein operator is 
\begin{equation}\label{summ2}
\mathcal{A} g(x^{{(1)}}, \ldots, x^{{(m)}})
=  \frac1m {{\sum_{i=1}^m \mathcal{T}^{(i)} g_{x^{(-i)}}(x^{(i)})}}.
\end{equation}
We note that for $\bX \sim q$, {the Stein identity} 
$\mathbb{E} \mathcal{A} g(\bX) =0$ {holds} and thus $\mathcal{A} $ is a  Stein operator. The domain of the operator 
will depend on the conditional distribution in question. {Instead of using the weights $w_i = \frac1m$, other positive weights which sum to 1 would be possible, but for simplicity we use equal weights. A 
{more} detailed theoretical
justification {of Eq.\,\ref{summ2}} is given in 
\cref{app:justify}.}

{In what follows we use as Stein operator for a differentiable uni-variate density $q$ the score operator from Eq.\,\ref{eq:steinRd}, given by 
\begin{eqnarray} \label{opform}
\mathcal{T}_q^{(i)} f (x) = f'(x)  + f(x)  \frac{q'(x)}{q(x)} .
\end{eqnarray} } 

{In \cref{prop:equal} of \cref{app:three_approaches} we shall see that the operator in Eq.\,\ref{summ2} equals the score-Stein operator in Eq.\,\ref{eq:steinRd}{; in \cref{app:three_approaches}  an example is also given}. For the development in this paper, Eq.\,\ref{summ2} is more convenient as it relates directly to conditional distributions.} 
{Other} choice{s of Stein operators}
{are} discussed for example in \cite{ley2017stein, mijoule2021stein,xu2021standard}.

\paragraph{Re-sampling Stein operators}

{The Stein operator Eq.\,\ref{summ2} depends on all coordinates $i\ \in [m]$. When $m$ is large we can estimate this operator via re-sampling with replacement, as follows. We draw $B$ samples $\{i_1,\dots, i_B\}$ with replacement from $[m]$ such that}
$\{i_1,\dots, i_B\}\sim \operatorname{Multinom}({B}, \{\frac1m\}_{i\in[m]})$. The 
re-sampled 
Stein operator {acting on $f: \R^m \rightarrow \R$} {is}
\begin{equation}\label{eq:glauber-stein_resample}
     \T^B f ({{{\z}}}) := \frac{1}{B}\sum_{b=1}^B \A^{(i_b)} f({{{\z}}}).
\end{equation}
Then we have 
$ \E \T{^B} f(\bX) =  \frac{1}{B}\sum_{b=1}^B  \E  \A^{(i_b)} f(\bX) = 0.$
So {$\T{^B}$} is again a Stein operator. 

In practice, when $m$ is large, the stochastic operator in Eq.\,\ref{eq:glauber-stein_resample} creates a computationally efficient way for comparing distributions.
{A} similar re-sampling strategy for constructing
stochastic operator{s are} 
considered 
 in the context of Bayesian inference \citep{gorham2020stochastic}, where 
 conditional score function{s, which are given in parametric form,} are re-sampled to derive score-based (or Langevin) Stein operators for posterior distributions.
The conditional distribution has been considered \citep{wang2018stein} and \citep{zhuo2018message} in the context of graphical models 
\citep{liu2016stein}. In graphical models, the conditional distribution
is simplified to conditioning on the Markov blanket \citep{wang2018stein}, which is a subset of the full coordinate; however, no random re-sampling {is used}.
{Conditional distributions also apply in message passing, but there, the sequence of updates is ordered.} 


\paragraph{Estimating Stein operators via score matching}

{
Usually the score function $q'/q$ in Eq.\,\ref{opform} is not available but needs to be estimated. An efficient way of estimating the score function is through score-matching, see for example \citep{hyvarinen2005estimation, song2021train, wenliang2019learning}.}
Score matching relies on the following score-matching (SM) objective \citep{hyvarinen2005estimation},
\begin{equation}\label{eq:sm_objective}
    J(p\|q) = \E_p \left[\left\|\nabla \log p({\x}) - \nabla \log q({\x})\right\|^2\right],
\end{equation}
which is particularly useful for unnormalised models {such as EBMs}. Additional details are included in \cref{app:sm_obj}.
{Often score matching estimators can be shown to be consistent, see for example \cite{song2020sliced}.
}
\cref{prop:Stein operators}, proven in 
\cref{app:proofs}, gives theoretical guarantees for the consistency of a general form of Stein operator estimation, 
as follows.

\begin{proposition}
\label{prop:Stein operators}
Suppose that  for $ i\in [m]$,  $\widehat{s}_{N}^{(i)} $ is a consistent estimator of the {uni-variate} score function  $ s^{(i)}$. Let $\mathcal{T}^{(i)}$ be a Stein operator for the uni-variate differentiable probability distribution ${Q}^{(i)}$ of the generalised density operator form Eq.\,\ref{opform}.
Let 
\begin{eqnarray*} 
{\widehat{\mathcal{T}}}^{(i)}_{N} g (x) = g'(x)+ g(x)  \widehat{s}_{N}^{(i)} \quad \quad \operatorname{ and } \quad \quad  \widehat{\mathcal{A}} g  = 
\frac{1}{m}\sum_i
{\widehat{\mathcal{T}}}^{(i)}_{N} g_{ x^{(-i)}}. 
\end{eqnarray*} 
Then 
${\widehat{\mathcal{T}}}^{(i)}_{N} $ is a consistent estimator for ${{\mathcal{T}}}^{(i)}$, {and} 
$\widehat{\mathcal{A}} $
is a consistent estimator of $\mathcal{A}.$
\end{proposition}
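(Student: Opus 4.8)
The plan is to unfold the definition of consistency and reduce the statement to an elementary continuous-mapping argument. By hypothesis, for each fixed $i \in [m]$ the estimator $\widehat{s}_N^{(i)}$ converges (in whatever mode of convergence is meant by ``consistent'' — in probability, or in an appropriate $L^2(Q^{(i)})$-type norm) to the true univariate score $s^{(i)}$ as $N \to \infty$. I would first make explicit, for a fixed test function $g$ in the domain of the operators and a fixed argument $x$ (with the other coordinates $x^{(-i)}$ held fixed), the pointwise difference
\begin{equation*}
\widehat{\mathcal{T}}^{(i)}_N g_{x^{(-i)}}(x) - \mathcal{T}^{(i)} g_{x^{(-i)}}(x) = g_{x^{(-i)}}(x)\bigl(\widehat{s}_N^{(i)}(x) - s^{(i)}(x)\bigr),
\end{equation*}
since the $g'$ term cancels exactly: both operators have the form of Eq.\,\ref{opform} and differ only in the score multiplier. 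This identity is the crux; it shows the error in the estimated Stein operator is simply the test function times the error in the estimated score.

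From here the first conclusion — that $\widehat{\mathcal{T}}^{(i)}_N$ is consistent for $\mathcal{T}^{(i)}$ — follows directly: the right-hand side above is $g$ times a quantity converging to $0$ by assumption, so it converges to $0$ in the same mode; if consistency is measured in an integrated ($L^2(Q^{(i)})$) sense, boundedness or square-integrability of $g$ under $Q^{(i)}$ (part of being in the Stein class / domain) lets one pass from $\|\widehat{s}_N^{(i)} - s^{(i)}\| \to 0$ to $\|g(\widehat{s}_N^{(i)} - s^{(i)})\| \to 0$ via Cauchy--Schwarz or a uniform bound on $g$. I would state the mode of convergence precisely at the outset to keep this step clean, matching whatever convention \cref{prop:Stein operators} and the surrounding text adopt for score-matching consistency (e.g.\ the consistency results of \cite{song2020sliced}).

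For the second conclusion, that $\widehat{\mathcal{A}}$ is consistent for $\mathcal{A}$, I would simply invoke linearity and finiteness of the sum. Writing
\begin{equation*}
\widehat{\mathcal{A}} g(\x) - \mathcal{A} g(\x) = \frac{1}{m}\sum_{i=1}^m \bigl(\widehat{\mathcal{T}}^{(i)}_N g_{x^{(-i)}}(x^{(i)}) - \mathcal{T}^{(i)} g_{x^{(-i)}}(x^{(i)})\bigr),
\end{equation*}
the triangle inequality bounds the left side by $\frac{1}{m}\sum_i$ of the $m$ per-coordinate errors, each of which tends to $0$ by the first part. Since $m$ is fixed and finite, a finite sum (or max) of terms converging to $0$ converges to $0$; in the in-probability case one uses that a finite union of small-probability events is small. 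Hence $\widehat{\mathcal{A}} g \to \mathcal{A} g$ in the relevant sense, uniformly over $g$ in a bounded subset of the Stein class if the per-coordinate bounds are uniform.

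The only real obstacle is bookkeeping rather than mathematics: being careful about the mode of convergence and about the quantifier structure (pointwise in $x$ versus integrated against $Q^{(i)}$, and whether convergence is required uniformly over the test function class $\mathcal{B}_1(\mathcal{H})$ that will later be used to form NP-KSD). If uniform-over-$g$ consistency is needed downstream, I would need the score-matching error bound to come with a rate or at least a uniform-in-$g$ control, which is available because $g$ enters only linearly and $k$ (hence $\mathcal{H}$) is assumed bounded; the bounded-kernel assumption stated just before Eq.\,\ref{summ2} supplies the uniform bound on $|g_{x^{(-i)}}(x)|$ needed to convert score consistency into operator consistency uniformly over the RKHS unit ball. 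I would flag this dependence explicitly so the later results that build on \cref{prop:Stein operators} can cite it cleanly.
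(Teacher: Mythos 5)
Your proposal is correct and follows essentially the same route as the paper's proof: exploit the exact cancellation of the $g'$ term so that $\widehat{\mathcal{T}}^{(i)}_{N} g(x) - \mathcal{T}^{(i)} g(x) = g(x)\bigl(\widehat{s}_{N}^{(i)}(x) - s^{(i)}(x)\bigr)$, deduce pointwise consistency of the estimated operator from consistency of the score estimator, and then pass to $\widehat{\mathcal{A}}$ via the finite average over $i \in [m]$. Your added bookkeeping on the mode of convergence and the uniform-in-$g$ bound is more explicit than the paper's argument but does not change the substance.
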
 

\paragraph{Non-parametric Stein operators with summary statistics}

In practice, the data ${y}^{(-i)}\in \R^{m-1}$ can 
be high dimensional,  e.g. image pixels, and the observations can be sparse. Thus, estimation of the conditional distribution can be unstable or exponentially large sample size is required. 
Inspired {by} \cite{xu2021stein} and \cite{xu2022agrasst}, {we use low-dimensional measurable non-trivial summary statistics $t$ and the conditional distribution of the data given $t$ as new target distributions. {Heuristically,} 
if two distributions match, then so do their conditional distributions.
{Thus, the  conditional distribution} $Q{^{(i)}}(A)$ is
replaced by 
$Q_t^{(i)}(A) = \P(X^{(i)} \in A |t(x^{(-i)} ))$.
Setting $t(x^{(-i)}) = x^{(-i)}$ replicates the actual conditional distribution. {We denote the uni-variate score function of $q_t(x | t(x^{(-i)})) $ by ${s_{t}^{(i)}(x | t(x^{(-i)}}))$, 
or by $s_{t}^{(i)}(x)$ when the context is clear}.
The summary statistics $t(x^{(-i)} )$ can be uni-variate or multi-variate, {and {they may attempt} to capture} 
useful distributional features. %
{Here} we consider uni-variate summary statistics {such as}
the sample mean.

\begin{algorithm}[t!]
  \caption{Estimating {the} conditional probability 
  via summary statistics
   }
   \label{alg:est_conditional}
\begin{algorithmic}[1]
\renewcommand{\algorithmicrequire}{\textbf{Input:}}
\renewcommand{\algorithmicensure}{\textbf{Procedure:}}
\REQUIRE
Generator $G$; summary statistics $t(\cdot)$; {number of samples} $N$ from $G$; re-sample size $B$
\ENSURE~~\\
\STATE Generate samples $\{{\y}_1,\dots,{\y}_N\}$ from $G$.
\STATE Generate coordinate index sample $\{i_1,\dots,i_B\}$
\STATE For ${i_b\in[m], l\in[N]}$, estimate $q(z^{(i_b)}| t(z^{-i_b})$ from samples
$\{{y}^{(i_b)}_l, t({y_l}^{-i_b})\}_{l \in [N]}$ via {the} score-matching objective in Eq.\,\ref{eq:sm_objective}.
\renewcommand{\algorithmicensure}{\textbf{Output:}}
\ENSURE
$\widehat s_{t, N}^{(i)}(z^{(i)} | t(z^{(-i)})), \forall i\in [m]$.
\end{algorithmic}
\end{algorithm}

The non-parametric Stein operator enables the construction of Stein-based statistics based on 
Eq.\,\ref{summ2}
with estimated score functions $\widehat s_{t,N}^{(i)}$ using generated samples from the model $G$, as shown in \cref{alg:est_conditional}. 
The re-sampled non-parametric Stein operator {is} 
$$\widehat{\mathcal{A}^B_{t,N}} g  = \frac{1}{B} \sum_b {\widehat{\mathcal{T}}}^{(i_b)}_{t,N} g_{ x^{(-i_b)}} { =\frac{1}{B} \sum_b \left( g_{ x^{(-i_b)}}' + g_{ x^{(-i_b)}} \widehat{s}_{t,N}^{(i)} \right)}.
$$ 

\paragraph{Non-parametric kernel Stein discrepancy}

With the well-defined non-parametric Stein operator, we define the corresponding non-parametric Stein discrepancy (NP-KSD)
using {the} Stein operator in Eq.\,\ref{eq:glauber-stein_resample},  the Stein discrepancy notion in Eq.\,\ref{eq:sd} and
choosing {as set of} test function{s the unit ball of the RKHS}  within unit ball RKHS. 
Similar{ly} to Eq.\,\ref{eq:ksd}, {we define the} NP-KSD with summary statistic $t$ 
as
\begin{equation}\label{eq:np-ksd}
    \operatorname{NP-KSD}_t(G\|p) = \sup_{f \in \B_1(\H)} \E_p[\widehat \A_{t,N}^B f] .
\end{equation}
{A s}imilar quadratic form as {in}  Eq.\,\ref{eq:KSDequiv} applies {to give} 
\begin{align}
{\NPKSD}_t^2(G\|p) = {\E}_{{\x},{\tilde \x} \sim p} [\widehat u^B_{t,N}({\x}, {\tilde{\x}})], \label{eq:NPKSDequiv}
\end{align}
where $\widehat u^B_{t,N} ({\x},{\tilde \x})=
\langle \widehat \A^B_{t,N} k(\x,\cdot), \widehat \A^B_{t,N} k(\tilde \x,\cdot)\rangle_{\H}$. The empirical estimate  {is}
\begin{align}
\widehat{\NPKSD}_t^2(G\|p) = \frac{1}{n^2} \sum_{i,j \in [n]} [\widehat u^B_{t,N}(\z_i, \z_j)], \label{eq:NPKSDempirical}
\end{align}
{where  $\mathbb{S}=\{\z_1, \dots, \z_n\} \sim p$.}
}
{Thus,} 
NP-KSD allows the 
computation
between a set of samples and a generative model, enabling the quality assessment {of} synthetic data generators even for implicit models.

The relationship between NP-KSD and KSD is clarified in the following result; {we use the notation 
$ \hat{\bf s}_{t,N}= (\hat{s}_{t,N} (x^{{(i)}}), i \in [m]) $.
Here we set 
\begin{equation}
    \label{eq:ksdt} 
{\KSD}_t^2(q_t\|p) 
= {\E}_{\x,\tilde{\x} \sim p} [\langle \A_t k(\x,\cdot),  \A_t k(\tilde \x,\cdot)\rangle_{\H} 
\quad \mbox{ with }  \quad 
\A_t g(\x) 
:= \frac1m \sum_{i=1}^m \mathcal{T}_{q_t}^{(i)} g_{x^{(-i)}}(
x^{(i)}
)
\end{equation}
as in Eq.\,\ref{summ2}, and following Eq.\,\ref{opform},
$
\mathcal{T}_{q_t}^{(i)} g_{x^{(-i)}}(x)
= g_{x^{(-i)}}'(x) + 
g_{x^{(-i)}} (x) s_{t}^{(i)} (x | t(x^{(-i)} )).
$}
More details about the interpretation of this quantity are given in App.\,\ref{app:asymp}.

{\begin{theorem}\label{th:KSDconv} 
Assume that the score function estimator vector $ \hat{{\bf{s}}}_{t,N}{= (\hat{{s}}_{t,N}^{(i)}, i=1, \dots, m )^\top}$ is asymptotically normal with mean $0$ and covariance  matrix $N^{-1} \Sigma_{s}$. Then 
${\NPKSD}_t^2(G\|p)$ converges in probability to ${\KSD}_t^2(q_t\|p)$ at rate at least $\min(B^{-\frac12}, N^{-\frac12})$. 
\end{theorem}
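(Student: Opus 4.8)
The plan is to split $\NPKSD_t^2(G\|p)-\KSD_t^2(q_t\|p)$ into two error contributions: one from re-sampling only $B$ of the $m$ coordinates (governed by $B^{-1/2}$), and one from replacing the true conditional scores $s_t^{(i)}$ by the score-matching estimators $\widehat s_{t,N}^{(i)}$ (governed by $N^{-1/2}$). Note first that $\NPKSD_t^2(G\|p)$ as defined in Eq.\,\ref{eq:NPKSDequiv} is a population quantity in $p$ whose randomness enters only through the generated samples $\y_1,\dots,\y_N\sim G$ (via $\widehat s_{t,N}^{(i)}$) and the re-sampled indices $i_1,\dots,i_B$; the limit $\KSD_t^2(q_t\|p)$ of Eq.\,\ref{eq:ksdt} is deterministic, so the empirical average over $\z_1,\dots,\z_n$ in Eq.\,\ref{eq:NPKSDempirical} plays no role here. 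Write, all as elements of $\H$, $v_\x:=\A_t k(\x,\cdot)$, $v^B_\x:=\A^B_t k(\x,\cdot)$ (the re-sampled operator built from the \emph{true} conditional scores), and $\widehat v_\x:=\widehat{\A}^{B}_{t,N} k(\x,\cdot)$, so that $u_t(\x,\tilde\x)=\langle v_\x,v_{\tilde\x}\rangle_\H$ and $\widehat u^B_{t,N}(\x,\tilde\x)=\langle\widehat v_\x,\widehat v_{\tilde\x}\rangle_\H$. Bilinearity and Cauchy--Schwarz give
\begin{equation*}
\bigl|\widehat u^B_{t,N}(\x,\tilde\x)-u_t(\x,\tilde\x)\bigr|\;\le\;\|\widehat v_\x-v_\x\|_\H\,\|\widehat v_{\tilde\x}\|_\H+\|v_\x\|_\H\,\|\widehat v_{\tilde\x}-v_{\tilde\x}\|_\H ,
\end{equation*}
and since $\NPKSD_t^2(G\|p)-\KSD_t^2(q_t\|p)=\E_{\x,\tilde\x\sim p}[\widehat u^B_{t,N}(\x,\tilde\x)-u_t(\x,\tilde\x)]$, it suffices to control $\|\widehat v_\x-v_\x\|_\H$, $\|v_\x\|_\H$ and $\|\widehat v_\x\|_\H$ --- uniformly in $\x$ when the conditional scores are bounded, and otherwise in $L^2(p)$, which is enough since everything is ultimately integrated against $p$ in both arguments.

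First, boundedness: using that $k$ is bounded together with the regularity conditions under which $\KSD_t$ is well defined (bounded first kernel derivatives, so $\partial_{x^{(i)}}k(\x,\cdot)\in\H$, and integrability of the conditional scores against $k$), one obtains $\E_p\|v_\x\|_\H^2<\infty$ (and $\sup_\x\|v_\x\|_\H<\infty$ in the bounded-score case); the same holds for every realisation of $v^B_\x$, since $\A^B_t$ is a convex combination over coordinates of the pieces appearing in $\A_t$. Next, decompose $\widehat v_\x-v_\x=(\widehat v_\x-v^B_\x)+(v^B_\x-v_\x)$. For the re-sampling term, $v^B_\x=\tfrac1B\sum_{b=1}^B\eta^{(i_b)}_\x$ with $\eta^{(i)}_\x:=\partial_{x^{(i)}}k(\x,\cdot)+s_t^{(i)}(x^{(i)})\,k(\x,\cdot)\in\H$, where the $i_b$ are i.i.d.\ uniform on $\{1,\dots,m\}$ and $\E[\eta^{(I)}_\x]=v_\x$; hence $v^B_\x-v_\x$ is an average of i.i.d.\ centred $\H$-valued terms of bounded (resp.\ square-integrable) norm, so $\E\|v^B_\x-v_\x\|_\H^2=O(B^{-1})$ and $\|v^B_\x-v_\x\|_\H=O_p(B^{-1/2})$. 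For the score term, by Eq.\,\ref{opform} the operators $\widehat{\A}^{B}_{t,N}$ and $\A^B_t$ differ only in the score factor, so $\widehat v_\x-v^B_\x=\tfrac1B\sum_{b=1}^B\bigl(\widehat s_{t,N}^{(i_b)}(x^{(i_b)})-s_t^{(i_b)}(x^{(i_b)})\bigr)k(\x,\cdot)$; the asymptotic-normality hypothesis (mean $0$, covariance $N^{-1}\Sigma_s$) makes each difference $\widehat s_{t,N}^{(i)}-s_t^{(i)}$ of stochastic order $N^{-1/2}$, and boundedness of $k(\x,\cdot)$ turns each summand, hence the average, into an $O_p(N^{-1/2})$ element of $\H$. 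Combining, $\|\widehat v_\x-v_\x\|_\H=O_p(B^{-1/2}+N^{-1/2})$ and $\|\widehat v_\x\|_\H=O_p(1)$ (resp.\ the $L^2(p)$ analogues).

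Plugging these into the displayed inequality gives $|\widehat u^B_{t,N}(\x,\tilde\x)-u_t(\x,\tilde\x)|=O_p(B^{-1/2}+N^{-1/2})$; integrating against $p$ in both arguments preserves the rate (the bound is free of $\x,\tilde\x$ in the bounded-score case, and Cauchy--Schwarz in $L^2(p)$ handles the general case), so $\NPKSD_t^2(G\|p)\xrightarrow{p}\KSD_t^2(q_t\|p)$, with convergence at the rate asserted in the statement, i.e.\ governed by the slower of the $B^{-1/2}$ re-sampling and the $N^{-1/2}$ score-estimation contributions. The cross term $\langle\widehat v_\x-v_\x,\widehat v_{\tilde\x}-v_{\tilde\x}\rangle_\H$ is of second order and does not affect the rate.

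The main obstacle is the score-estimation step: upgrading the \emph{pointwise} asymptotic normality of $\widehat s_{t,N}^{(i)}$ to control of $\|\widehat v_\x-v^B_\x\|_\H$ that is uniform in $\x$ (or $L^2(p)$-controlled) is delicate when the true conditional scores $s_t^{(i)}$ are unbounded, and requires the same integrability/tilting conditions on the kernel that are already needed for $\KSD_t$ to be finite, together with a careful interchange of the supremum (or of $\E_p$) with the $O_p$ statements. A secondary point is that the re-sampled indices and the samples from $G$ live on one probability space, so the $O_p(B^{-1/2})$ and $O_p(N^{-1/2})$ bounds must be proved jointly and combined by the triangle inequality rather than treated as independent; one should also verify that the assumptions guaranteeing $\E_p\|v_\x\|_\H^2<\infty$ are precisely those under which NP-KSD and $\KSD_t$ are defined.
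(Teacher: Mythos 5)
Your proposal is correct in substance and isolates exactly the two error sources that drive the paper's proof: the coordinate re-sampling (order $B^{-\frac12}$) and the conditional-score estimation (order $N^{-\frac12}$), combined through a triangle-inequality decomposition and integrated against $p$. The technical route, however, is genuinely different. The paper expands $\NPKSD_t^2(G\|p)$ as $\KSD_t^2(q_t\|p)$ plus cross and quadratic terms in $(\widehat\A^B_{t,N}-\A_t)k(\x,\cdot)$, splits $\widehat\A^B_{t,N}-\A_t$ through the intermediate operator $\widehat\A_{t,N}$ (all $m$ coordinates, estimated scores), and then controls each piece by distributional results: the re-sampling fluctuation is written as $\frac1B\sum_i \mathcal{T}^{(i)}_{t,N}g(\x)\{k_i-\E(k_i)\}$ with multinomial counts $k_i$ and handled via a normal approximation with variance of order $B^{-1}$ \citep{holmes2004stein,xu2021stein}, while the score part uses the assumed asymptotic normality; the quadratic terms are then shown to be of orders $B^{-1}$, $(NB)^{-\frac12}$ and $N^{-1}$. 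You instead pass through the re-sampled operator built from the \emph{true} scores and bound $\|\widehat\A^B_{t,N}k(\x,\cdot)-\A_t k(\x,\cdot)\|_{\H}$ directly, using Cauchy--Schwarz in $\H$ and a second-moment (Chebyshev-type) bound for the i.i.d.\ centred $\H$-valued re-sampling average. This is more elementary and fully sufficient for the stated rate; what it does not deliver is the distributional refinement the paper extracts along the way, namely that the fluctuations of ${\NPKSD}_t^2-{\KSD}_t^2$ are approximately a (variance-)mixture of normals, which the paper invokes in the discussion immediately after the theorem. Your caveat about upgrading pointwise asymptotic normality of $\widehat s^{(i)}_{t,N}$ to a bound that is uniform in $\x$ or controlled in $L^2(p)$ is fair, but the paper glosses the same point (arguing that in practice the operators are only evaluated at the $n$ observations), so it is not a gap relative to the paper's own level of rigour.
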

The proof of Theorem \ref{th:KSDconv}, which is found in App.\,\ref{app:proofs}, also shows that the distribution  ${\NPKSD}_t^2(G\|p) -{\KSD}_t^2(q_t\|p)$  involves mixture of normal variables. The assumption of asymptotic normality for score matching estimators is often satisfied, see for example \cite{song2020sliced}. 
}

\paragraph{Model assessment with NP-KSD}

Given an implicit generative model $G$
and a set of observed samples
$\mathbb S = \{{\z}_1,\ldots,{\z}_n\}$, we aim to test the null hypothesis $H_0:\mathbb S \sim G$ versus the alternative $H_1:\mathbb S \not\sim G$. {This test} 
assume{s that}  samples generated from $G$ follows some (unknown) distribution $q$ and $\mathbb{S}$ are generated according to some (unknown) distribution $p$. The null hypothesis is $H_0: p = q$ while the alternative is $H_1: p \neq q$.
We note that the observed sample size $n$ is fixed.

\paragraph{NP-KSD testing procedures}
NP-KSD can be {applied} for testing the above hypothesis {using the}
testing procedure
outlined in \cref{alg:NP-KSD}. {In contrast} 
to the KSD testing procedure in \cref{sec:stein_review}, {the} NP-KSD test in \cref{alg:NP-KSD} is a Monte Carlo based test \citep{xu2021stein,xu2022agrasst,schrab2022ksd} {for which} 
the null distribution is approximated via samples generated from $G$ instead of the wild bootstrap procedure \citep{chwialkowski2014wild}.
{The reasons for} 
employing the Monte Carlo testing strategy instead of the wild-bootstrap {are} 1). {T}he non-parametric Stein operator depends on {the} random function $\widehat s_t$ so that {classical results for} 
V-statistics convergence {which assume that the sole source of randomness is the bootstrap}  may not apply\footnote{{A} KSD with random Stein kernel has been briefly discussed in \cite{fernandez2020kernel} when the $h_q$ function requires estimation from relevant survival functions.};
2). {While} the wild-bootstrap is asymptotically consistent {as observed sample size $n \rightarrow \infty$}, {it}  
may not necessarily control the type-I error 
{in a} non-asymptotic regime {where $n$ is fixed}. {More details can be found in \cref{app:mmd}.}



{Here we note that any test which is based on the summary statistic $t$ will only be able to test for a distribution up to equivalence of their  distributions with respect to the summary statistic $t$; two distributions $P$ and $Q$ 
 are equivalent w.r.t. the summary statistics $t$ if $P(\bX|t(\bX)) = Q(\bX|t(\bX))$.
Thus the null hypothesis for the NP-KSD test is that the distribution is equivalent to $P$ with respect to $t$. Hence, the null hypothesis specifies the conditional distribution, not the unconditional distribution.} 


\begin{algorithm}[t!]
   \caption{Assessment procedures for implicit generative models}
   \label{alg:NP-KSD}
\begin{algorithmic}[1]
\renewcommand{\algorithmicrequire}{\textbf{Input:}}
\renewcommand{\algorithmicensure}{\textbf{Objective:}}
\REQUIRE
    Observed sample set $\mathbb{S}=\{{\z}_1,\ldots,{\z}_n\}$; 
    generator $G$ and generated sample size $N$; 
    estimation statistics $t$; RKHS kernel $K$; re-sampling size $B$; { bootstrap sample size $b$}; confidence level $\alpha$;
\renewcommand{\algorithmicensure}{\textbf{Procedure:}}
\STATE Estimate $\widehat s(z^{(i)}|t(z^{(-i)}))$ based on Algorithm \ref{alg:est_conditional}.
\STATE Uniformly generate re-sampling index
$\{i_{1},\dots,i_{B}\}$ from $[m]$, with replacement.
\STATE Compute $\tau =\widehat{\operatorname{NP-KSD}}^2(\widehat s_t;\mathbb{S})$ in Eq.\,(\ref{eq:NPKSDempirical}). 
\STATE Simulate $\mathbb{S}_i=\{{\y}'_1,\dots,{\y}'_n\}$ for ${i}\in[b]$ from $G$.
\STATE Compute $\tau_i =\widehat{\operatorname{NP-KSD}}^2(\widehat s_t;\mathbb{S}_i)$ in  again with index re-sampling. 
\STATE {Estimate} {the} {empirical}   {(1- $\alpha$)}
quantile $\gamma_{1-\alpha}$ via $\{\tau_1,\dots,\tau_b \}$.
\renewcommand{\algorithmicrequire}{\textbf{Output:}}
\REQUIRE
Reject the null {hypothesis} if $\tau > \gamma_{1-\alpha}$; otherwise do not reject.
\end{algorithmic}
\end{algorithm}

\paragraph{Related works}
To assess whether an implicit generative models can generate samples that are \emph{significantly} good for the desired data model, several hypothesis testing procedures have been studied.
\cite{jitkrittum2018informative} has proposed kernel-based test statistics, Relative Unbiased Mean Embedding (Rel-UME) test and Relative Finite-Set Stein Discrepancy (Rel-FSSD) test for relative model goodness-of-fit, i.e. whether model S is a better fit than model R. While Rel-UME is applicable for implicit generative models,  Rel-FSSD still requires explicit knowledge of the unnormalised density. The idea for assessing sample qualit{y} for 
implicit generative models is through addressing two-sample problem, where samples generated from the implicit model {are} compared with the observed data.
In this sense, maximum-mean-discrepancy (MMD) may also apply for assessing sample qualities for the implicit models. With efficient choice of (deep) kernel, \cite{liu2020learning} applied MMD tests to assess the distributional difference for image data, e.g. MNIST \citep{lecun1998gradient} v.s. digits image trained via deep convolutional GAN (DCGAN) \citep{radford2015unsupervised}; CIFAR10 \citep{krizhevsky2009learning} v.s. CIFAR10.1 \citep{recht2019imagenet}. However, as the distribution is represented via samples, the two-sample based assessment suffers from limited probabilistic information from the implicit model 
and low estimation accuracy when the sample size for observed data is small.


\section{Experiments}\label{sec:exp}

\subsection{Baseline and competing approaches}

We 
{illustrate} the proposed NP-KSD testing procedure with different choice of summary statistics. {We denote by} 
\textbf{NP-KSD} {the version which uses} the estimation of {the} conditional score, i.e. $t(x^{(-i)}) = x^{(-i)}$; {by}
\textbf{NP-KSD\_mean} {the version which uses} conditioning on the mean statistics, i.e. $t(x^{(-i)}) = \frac{1}{m-1}\sum_{j\neq i} x^{(j)}$; 
{and by}  \textbf{NP-KSD\_G}  {the version which fits} 
a Gaussian model as  conditional density\footnote{\textbf{NP-KSD\_G} for non-Gaussian densities {is} generally mis-specified. We deliberately check this case to {assess}  the robustness of the NP-KSD procedure under model mis-specification.}.

Two-sample testing methods can be useful for model assessment, where the observed sample set is tested against sample set generated from the model. In our setting where $n \ll N$, we consider a consistent non-asymptotic MMD-based test, \textbf{MMDAgg} \citep{schrab2021mmd}, as our competing approach; {see \cref{app:mmd} for more details}.
%
For synthetic distributions where the null models have explicit densities, we include {the} \textbf{KSD} goodness-of-fit testing procedure in \cref{sec:stein_review} as the baseline.
Gaussian kernels are used and the median heuristic \citep{gretton2007kernel} is applied for bandwidth selection. 
{As a caveat, in view of \citep{gorham2015measuring}, 
when the kernel decays more rapidly than the score function grows, then identifiability of $q_t$ through a KSD method may not be guaranteed. {Details while MMD 
is not included in this list are found in \cref{app:mmd}.} 
}




\subsection{Experiments on synthetic distributions
}

    

\begin{figure}[t!]
    \centering
     {\includegraphics[width=0.7\textwidth]{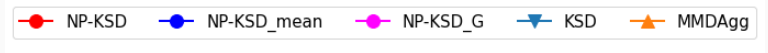}}
     
    \subfigure[GVD: $n=100$
    ]{
    \includegraphics[width=0.239\textwidth]{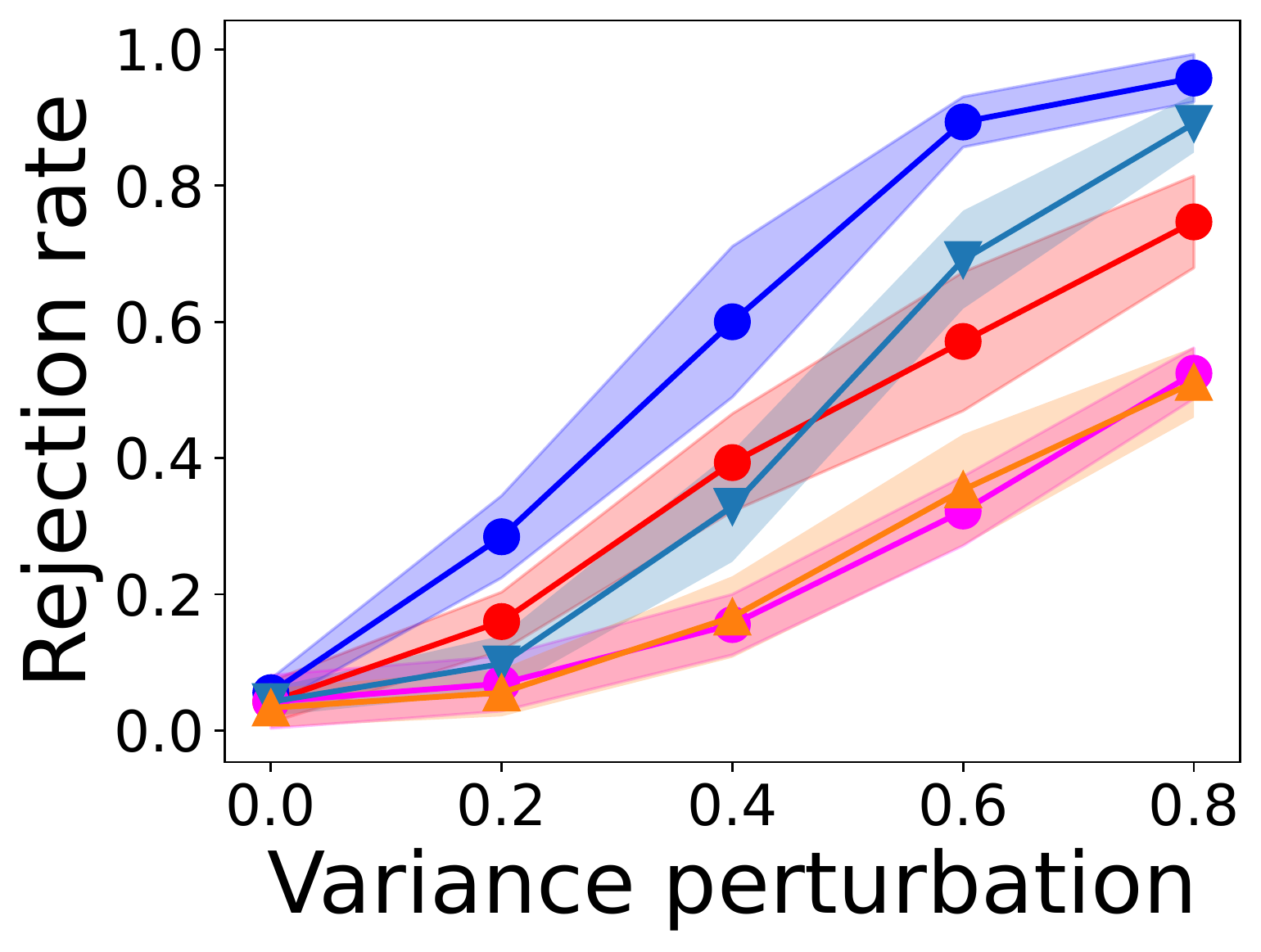}\label{fig:GVD_per}}
        \subfigure[GVD:$\sigma_{per}=0.4$
    ]{
    \includegraphics[width=0.239\textwidth]{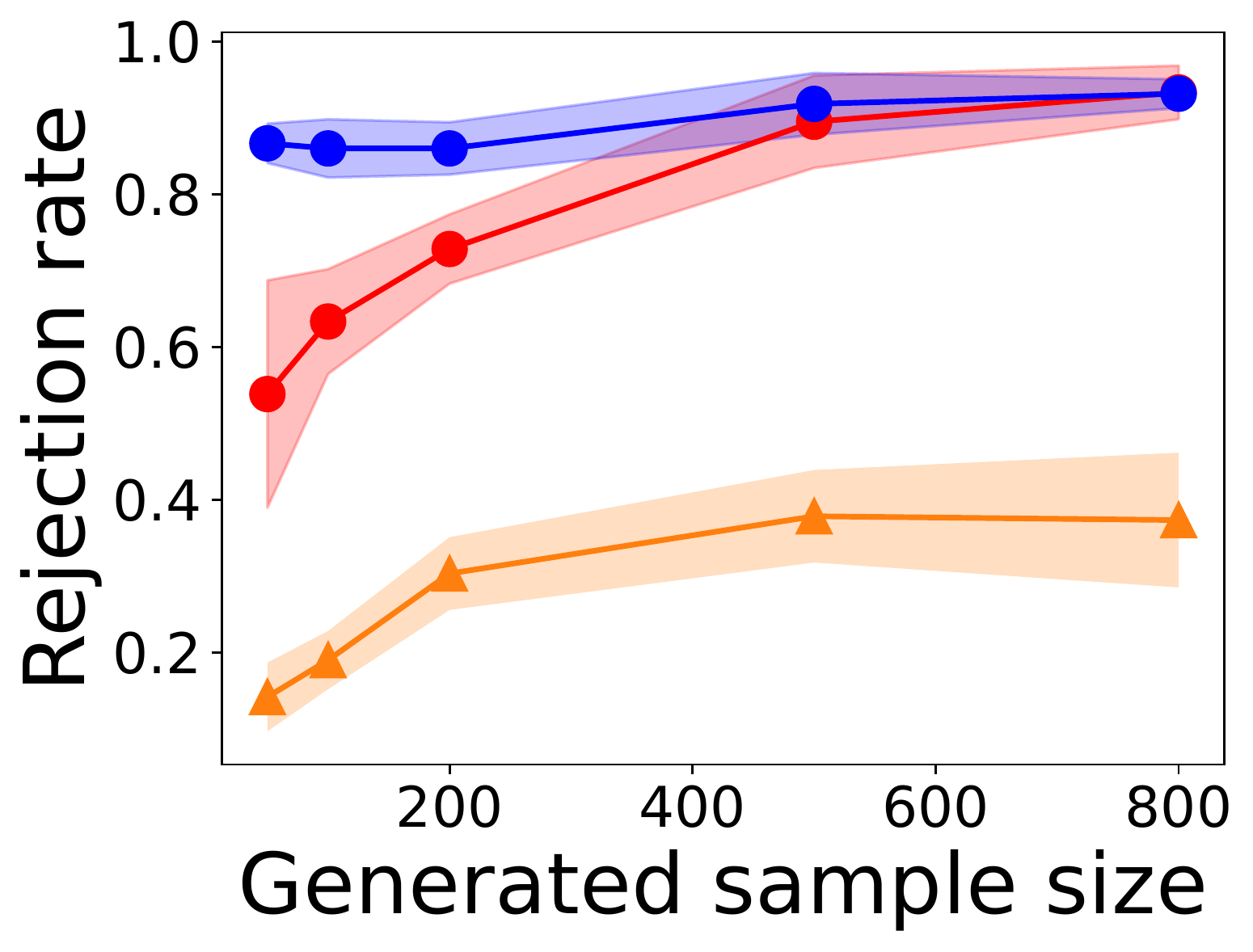}\label{fig:GVD_sample}}    \subfigure[MoG: $n=200$
    ]{
    \includegraphics[width=0.239\textwidth]{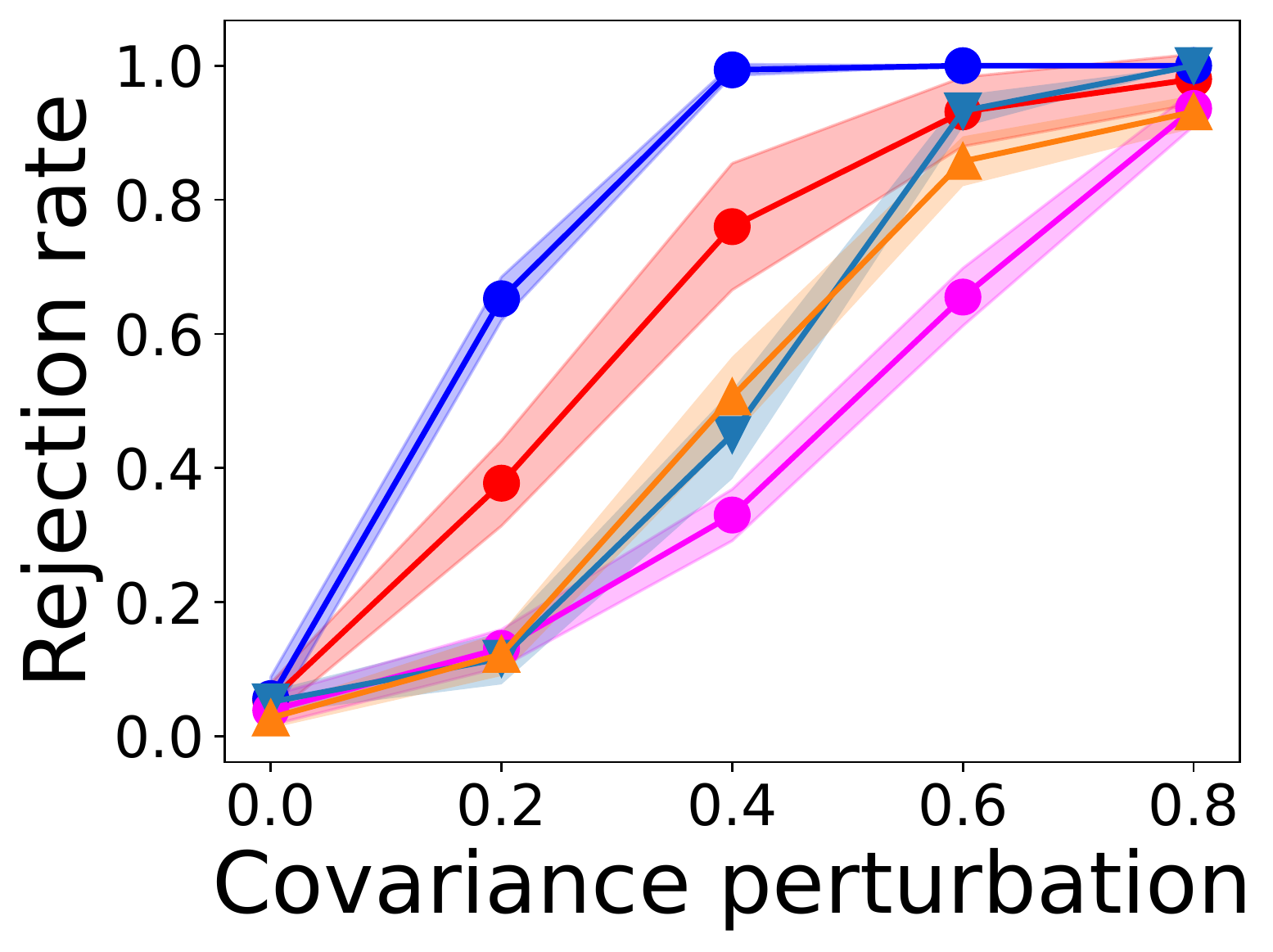}\label{fig:MoG_per}}    \subfigure[MoG: $m=40$
    ]{
    \includegraphics[width=0.239\textwidth]{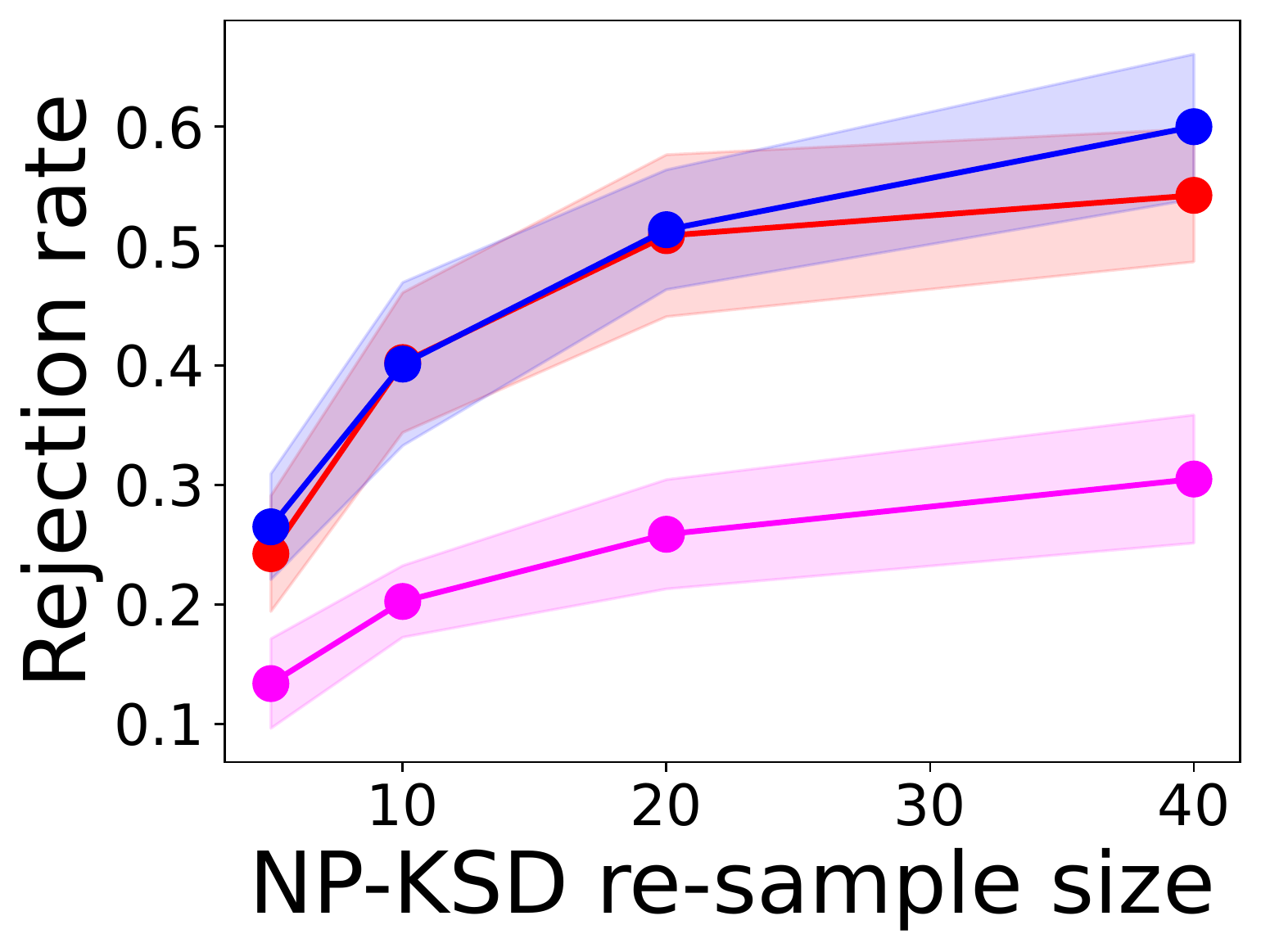}\label{fig:MoG_resample}}
\vspace{-0.1cm}
    \caption{Rejection rates of the synthetic distributions: test level $\alpha = 0.05$; $100$ trials per round of experiment; $10$ rounds of experiment are taken for average and standard deviation; bootstrap sample size $b=500$; 
$m=3$ for     (a) and (b); $m=6$ for (c); $n=100$, $\sigma_{per}=0.5$ for (d).
}
\vspace{-0.35cm}
    \label{fig:exp_synthetic}
\end{figure}


\paragraph{Gaussian Variance Difference (GVD)}
We first consider a standard synthetic setting, 
studied in \citet{jitkrittum2017linear}, {in which}
the null distribution is multivariate Gaussian
with mean zero and identity {covariance} matrix. 
The alternative is set to perturb the 
the diagonal terms of the covariance matrix, {i.e. the variances, all by the same amount.}

The rejection rate against {the} variance{s} perturbation is shown in \cref{fig:GVD_per}. 
From the result, we see that all the tests presented {have} 
controlled type-I error. {For all the tests the power increases with increased perturbation.} 
\textbf{NP-KSD} and \textbf{NP-KSD\_mean} outperform the \textbf{MMDAgg} approach. {Using the}  mean statistics, \textbf{NP-KSD\_mean} is having slightly higher power than \textbf{KSD}. 
The mis-specified \textbf{NP-KSD\_G} has lower power, but is still competitive to \textbf{MMDAgg}.

The test power against the sample size $N$ generated from the null model is shown in \cref{fig:GVD_sample}. The generated samples are used as another sample set for the \textbf{MMDAgg} two-sample procedure, while used for estimating the conditional score for {NP-KSD}-based methods.
As the generated sample size increase{s}, {the power of} \textbf{MMDAgg} {increases more slowly than that of the} 
{NP-KSD}-based methods, {which}
achieve maximum test power in the presented setting. The {NP-KSD}-based tests tend to have lower 
{variability of the} test power, {indicating}  
more reliable testing procedures than 
\textbf{MMDAgg}.
\paragraph{Mixture of Gaussian (MoG)}
{Next}, we consider as a more difficult problem 
{that} the null model is a two-component mixture of {two independent} Gaussians. Both Gaussian components have identity {covariance} matrix.
The alternative is set to perturb the covariance between adjacent {coordinates}.

The rejection rate against {this} perturbation of covariance terms are presented in \cref{fig:MoG_per}. The results show consistent type I error.
The \textbf{NP-KSD} and \textbf{NP-KSD\_mean} {tests} have better test power compared to \textbf{KSD} and \textbf{MMDAgg}, although \textbf{NP-KSD} has slightly higher variance. {Among the $\NPKSD$ tests, the smallest variability is achieved by}  \textbf{NP-KSD\_mean}.
{For the test with $m=40$, we also vary the re-sample size $B$.} 
As shown in \cref{fig:MoG_resample},
while the
{variability} of the average test power also increased slightly. From the result, we also see that
{for $B=20 = m/2$ the test power is already competive compared to $B=40$.}
Additional experimental results including computational runtime and training generative models for synthetic distributions are included in \cref{app:exp}.

\subsection{Applications {to} deep generative models}
For real-world applications, we assess models trained from well-studied generative modelling procedures, including {a} Generative Adversarial Network (\textbf{GAN}) \citep{goodfellow2014generative} with multilayer perceptron (MLP), {a}  Deep Convolutional Generative Adversarial Network (\textbf{DCGAN}) \citep{radford2015unsupervised}, {and a}  Variational Autoencoder (\textbf{VAE}) \citep{kingma2013auto}. 
We also consider {a} Noise Conditional Score Network (\textbf{NCSN}) \citep{song2020improved}, which is a
score-based generative modelling approach, where the score functions are learned \citep{song2019generative} to performed annealed Langevin dynamics for sample generation. 
We also denote \textbf{Real} as the scheme that generates samples randomly from the training data, which essentially acts as a generator of the null distribution.





\paragraph{MNIST Dataset}
{This} dataset contains 
$28\times 28$ grey-scale images of
handwritten digits \citep{lecun1998gradient}\footnote{
\url{https://pytorch.org/vision/main/generated/torchvision.datasets.MNIST.html} }. 
It consist of $60,000$ training samples and $10,000$ test samples.
Deep generative models in \cref{tab:mnist} are trained using the training samples.
We assess the quality of these 
trained generative models by testing against the true observed {MNIST} samples (from the test set).
Samples from both distributions are visually illustrated in \cref{fig:exp_mnist} 
in 
\cref{app:exp}. 

   
\begin{table}[t]
\centering
\begin{tabular}{l|cccc|c}
\toprule
{} & GAN\_MLP & DCGAN & VAE & NCSN & Real \\
\hline

NP-KSD & 1.00 & 0.92 & 1.00 & 1.00 &  0.03 \\
NP-KSD\_m & 1.00 & 1.00 & 1.00 & 1.00 & 0.01 \\
   \hline
   
MMDAgg & 1.00 & 0.73 & 0.93 & 1.00  & 0.06 \\
   \bottomrule
\end{tabular}
  \vspace{0.3cm}
    \caption{Rejection rate for 
    MNIST {generative models.} 
    }\label{tab:mnist}
    \vspace{-0.5cm}
\end{table}
$600$ samples are generated from the generative models and $100$ samples are used for the test; test level $\alpha = 0.05$.
From \cref{tab:mnist}, we see that all the deep generative models have high rejection rate, showing that the trained models are not good enough. Testing with the \textbf{Real} scheme has controlled type-I error. {Thus, NP-KSD detects that the ``real'' data are a true sample set from the underlying dataset.}
%



\paragraph{CIFAR10 Dataset}
{This} dataset contains 
$32\times 32$ RGB coloured images \citep{krizhevsky2009learning}\footnote{
\url{https://pytorch.org/vision/stable/generated/torchvision.datasets.CIFAR10.html} }. 
It consist of $50,000$ training samples and $10,000$ test samples.
Deep generative models in \cref{tab:cifar10} are trained using the training samples and test samples are randomly drawn from the test set.
\begin{table}[t]
\centering
\begin{tabular}{l|ccc|c}
\toprule
{} & 
DCGAN & 
NCSN & CIFAR10.1& Real \\
\hline
NP-KSD & 0.68 & 0.73 & 0.92 &  0.06 \\
NP-KSD\_m & 0.74  & 0.81 & 0.96 & 0.02 \\
   \hline
MMDAgg & 0.48 & 0.57 & 0.83 & 0.07 \\
\bottomrule
\end{tabular}
\vspace{0.3cm}
    \caption{Rejection rate {for}
    CIFAR10 generative models.}\label{tab:cifar10}
    \vspace{-0.5cm}
\end{table}
{Samples are illustrated in Figure \ref{fig:cifar10} in \cref{app:exp}.}
We also compare with {the} CIFAR10.1 dataset\citep{recht2018cifar10.1}\footnote{
\url{https://github.com/modestyachts/CIFAR-10.1/tree/master/datasets}}, which is created to differ from CIFAR10 to investigate generalisation power for training classifiers. 
$800$ samples are generated from the generative models and $200$ samples are used for the test; test level $\alpha = 0.05$. 
\cref{tab:cifar10} {shows}
higher rejection rates {for NP-KSD tests} compared to MMDAgg, echoing the results for synthetic distributions.
The trained \textbf{DCGAN} generates samples with lower rejection rate in the CIFAR10 dataset {than in the CIFAR10.1 dataset}. 
We also see 
that the score-based NCSN has higher rejection rate than the non-score-based DCGAN, despite NP-KSD being a score-based test.  
The distribution difference between CIFAR10 and CIFAR10.1 can be well-distinguished from the tests.
Testing with the \textbf{Real} scheme 
{again} has controlled type-I error.

\section{Conclusion and future directions}\label{sec:conclusion}

Synthetic data are in high demand, for example for training ML procedures; quality is important. Synthetic data which miss important features in the data can lead to erroneous conclusions, which in the case of medical applications could be fatal, and in the case of loan applications for example could be detrimental to personal or business development. NP-KSD provides a method for assessing synthetic data generators which comes with theoretical guarantees. {Our experiments on synthetic data have shown that NP-KSD achieves good test power and controlled {type-I}
error. On real data, 
NP-KSD detects samples from the true dataset.
That none of the classical deep learning methods used in this paper has a satisfactory rejection rate indicates scope for further developments in synthetic data generation.}

Future research will assess alternatives to the computer-intensive Monte Carlo method for estimating the null distribution, {for example adapting wild-bootstrap procedures}. It will explore alternative choices of score estimation as well as of kernel functions.

{Finally, s}ome caution is advised. The choice of summary statistic may have strong influence on the results and a classification based on  NP-KSD  may still miss some features. {Erroneous decisions could be reached when training classifiers. Without scrutiny this could lead to severe consequences for example in health science applications.}  Yet NP-KSD  is an important step towards understanding black-box data generating methods and thus understanding their potential shortcomings. 




\bibliography{main}
\bibliographystyle{plainnat}

\clearpage
\appendix




\section{Justification of the Stein operator}\label{app:justify}
{Here we justify the two-step approach for constructing a Stein operator.}

\subsection{Step 1: A non-parametric Stein operator}\label{sec:glauber_stein}

{
Suppose 
we can estimate the conditional distribution from data. Then we can create a Markov chain with values in $(\mathbb{R}^d)^m$ 
as follows. Starting with $Z_0= \{x_1,\dots,x_m\}$  with $x_i \in  \mathbb{R}^d$ for 
$i=1, \ldots, m$
(often we choose $d=1$),  we pick an index 
$I \in [m]$ at random. If $I=i$ we replace $x_i$ by $X_i'$ drawn from the conditional distribution of $X_i$ given $(X_j: j \ne i)$.
This gives $Z_1= (x_1, \ldots, x_{i-1}, X_i', x_{i+1}, \ldots, x_m)$\footnote{Denote $Z_1= (x^{(1)}, \ldots, x^{(i-1)}, {X^{(i)}}', x^{(i+1)}, \ldots, x^{(m)})\in \R^m$ where the superscript $(i)$ is used for coordinate index. The subscript is used to denote different samples.};
see for example \cite{reinert2005three}.
To make this a {continuous-time Markov process} generator, we wait {an} exponential(1)-{distributed time} before every change. }

{
This generator {induces}  a Stein operator for the target distribution as follows. {Here we take $d=1$ for clarity; the generalisation to other $d$ is straightforward.}
Let $f: {\R^m}\to \R$ and consider the expectation w.r.t. the one-step evolution of the Markov chain
\begin{align*} 
&\E_{-i} [f(x^{(1)}, \ldots, x^{(i-1)}, X^{(i)}, x^{(i+1)}, \ldots, x^{(m)})]
\\
&= \int f(x^{(1)}, \ldots, x^{(i-1)}, y, x^{(i+1)}, \ldots, x^{(m)}) \P(X^{(i)}=y| X^{(j)} = x^{(j)}, j \ne i).
\end{align*}
We now consider the index $i$ as the $i$-th coordinate of multivariate random variables in $\R^m$.
The conditional expectation here fixing all but the $i$-th \emph{coordinate} term
only depends on the 
uni-variate
conditional distribution 
$Q{^{(i)}}$ 
with $Q{^{(i)}} (A) = \P( X^{(i)} \in A | X^{(j)} = x^{(j)},  j \ne i) $. 
{Thus,}
the Stein operator induced from the Markov chain has the form
\begin{equation}\label{eq:glauber-stein-raw}
 \A f(z) =  \A^{(I)} f(z)    
\end{equation}
where
\begin{equation}\label{eq:glauber-stein-component}
    \A^{(i)} f({{\x}} ) = \E_{-i} [f(x^{(1)}, \ldots, x^{(i-1)}, X^{(i)}, x^{(i+1)}, \ldots, x^{(m)})
    ] - f({{\x}}).
\end{equation}
From the 
law of total expectation {it follows that} 
the {Stein}  identity holds.

\subsection{Step 2: marginal Stein operators}
{
In Eq.\,(\ref{eq:glauber-stein-component}), the expectation 
$$\E_{-i} [f(x^{{(i)}}, \ldots, x^{{(i-1)}}, X^{{(i)}}, x^{{(i+1)}}, \ldots, x^{{(m)}})] - f(x^{{(1)}}, \ldots, x^{{(m)}}) $$
can itself be treated via Stein's method, by finding a Stein operator $\mathcal{T}^{{(i)}}$ and a function $g$ such that $g=g_f$ solves the $\mathcal{T}^{(i)}$-Stein equation Eq.\,(\ref{eq:stein_equation}) for $f$; 
\begin{equation}\label{stein-t}
\mathcal{T}^{(i)} g (x) = 
\E_{-i} [f(x^{(1)}, \ldots, x^{(i-1)}, X^{(i)}, x^{(i+1)}, \ldots, x^{(m)})] - f(x^{(1)}, \ldots, x^{(m)}).
\end{equation} 
Fixing $x_j, j \ne i$ and setting $f^{(i)}(x) = 
f(x^{(1)}, \ldots, x^{(i-1)}, x, x^{(i+1)}, \ldots, x^{(m)})$, we view $\mathcal{T}^{(i)}$ as a Stein operator for a uni-variate distribution, acting on functions $g=g_{x^{(-i)}}:\mathbb{R} \rightarrow \mathbb{R}$. }
 
{ 
Summarising the approach, the Stein operator $\mathcal{A} $ acting on functions $f: \mathbb{R}^m \rightarrow\mathbb{R}$ 
underlying the non-parametric Stein operator is 
\begin{equation}\label{summ}
\mathcal{A} f(x^{(1)}, \ldots, x^{(m)}) = \mathcal{T}^{(I)} g_{f, x^{-I}}(x^{(I)})
\end{equation}
where $I \in [m]$ is a randomly chosen index.
{In view of Eq.\,(\ref{summ})  we take $g: \mathbb{R}^m \rightarrow\mathbb{R}$, write 
$g_{x^{(-i)}}(x): \mathbb{R} \rightarrow\mathbb{R}$ for the uni-variate function which acts only on the coordinate $i$ and fixes the other coordinates to equal $x^{(-i)}$, we {as} Stein operator (using the same letter $\mathcal{A}$ as before, which is  abuse of notation); 
\begin{equation*}
\mathcal{A} g(x^{(1)}, \ldots, x^{(m)}) = \mathcal{T}^{(I)} g_{x^{-I}}(x^{(I)}).
\end{equation*}
This formulation simplifies Eq.\,(\ref{summ}) in that we no longer have to consider the connection between $f$ an $g$.
}
} 
{The final step is to note that when we condition on the random index $I$, again a Stein operator is obtained, as follows. As 
\begin{equation}\label{eq:steinfinal}
\mathbbm{E}_I [\mathcal{A} g(x^{(1)}, \ldots, x^{(m)}) ]=  \frac1m \sum_{i=1}^m \mathcal{T}^{(i)} g_{x^{(-i)}}(x^{(i)}).
\end{equation}
As $\mathbb{E} [\mathcal{T}^{(i)} g_{X^{(-i)}}(X^{(i)})]=0$,
the Stein identity is satisfied. The operator in Eq.\,(\ref{eq:steinfinal}) is the Stein operator given in Eq.\,(\ref{summ2}).} 
The 
strategy {of} averaging over all coordinate terms $i\in[m]$ has also studied {in}  variational inference, 
via coordinate ascent variational inference (CAVI) \citep{bishop2006pattern} {which} 
focuses on latent variable inference. }

\section{Proofs and additional results}\label{app:proofs}

{Assuming that if $f \in \mathcal{H} $ then $-f \in \mathcal{H}$ we can assume that the supremum over the expectation is non-negative, and with Eq.\,\ref{eq:np-ksd},
\begin{eqnarray}
0 \le {\NPKSD}_t (P \| Q)  
&=& \sup_{f \in \B_1(\H)} \E_p[\widehat \A_{t,N}^B f] \nonumber 
\\
&=& \sup_f \{ \E_p \mathcal{A}_tf + \E_p [ \widehat \A_{t,N}^B  -  \mathcal{A}_t] f \}\nonumber  \\
&= & \sup_f \{ \E_p \mathcal{A}_tf + \E_p [ {\widehat \A}_{t,N}^B -  \widehat {\mathcal{A}}_{t,N} ] f   + \E_p[ f  (\widehat{s}^{(i)}_{t,N} - \log q_t') ] \} .
\label{eq:npdecomp}\end{eqnarray}}





{Here ${\widehat \A}_{t,N}$ is the Stein operator using the estimated conditional score function $\hat{s}_{t,N}$ with the estimation based on $N$ synthetic observations. We now assess the contribution to \ref{eq:npdecomp} which stems from estimating the score function.} {Note that here we only need to estimate a one-dimensional score function and hence the pitfalls of score estimation in high dimensions do not apply. We note however the contribution \cite{zhou2020nonparametric} for a general framework.}

Assume that we estimate the uni-variate conditional density ${q}_t^{(i)}$ 
based on $N$ samples. We assume that ${q}_t^{(i)}$  is differentiable, and we denote its score function by
$$ s_t^{(i)} (x) = \frac{({q}_t^{(i)})'(x)}{{q}_t^{(i)}(x)}.$$
{We next prove an extension of Proposition \ref{prop:Stein operators}.} 

\begin{proposition}\label{lem:Stein operatorsb}
Suppose that  for $ i\in [m]$,  $\widehat{s}_{N}^{(i)} $ is a consistent estimator of the {uni-variate} score function  $ s^{(i)}$. Let $\mathcal{T}^{(i)}$ be a Stein operator for the uni-variate differentiable probability distribution ${Q}^{(i)}$ of the generalised density operator form Eq.\,(\ref{opform}).
Let 
\begin{eqnarray*} 
{\widehat{\mathcal{T}}}^{(i)}_{N} g (x) &=& g'(x) + g(x)  \widehat{s}_{N}^{(i)}\\  \widehat{\mathcal{A}}g (\x)  &=& {\widehat{\mathcal{T}}}^{(I)}_{N} g_{ x^{(-I)}} (x^{(I)})  \quad \quad \mbox{ and }\\   \widehat{\mathcal{A}}_N g (\x)  &=& {\frac1m \sum_{i \in [m]} {\widehat{\mathcal{T}}}^{(i)}_{N} g_{ x^{(-i)}} (x^{(i)})}.
\end{eqnarray*} 
Then 
${\widehat{\mathcal{T}}}^{(i)}_{N} $ is a consistent estimator for ${{\mathcal{T}}}^{(i)}$, {and} 
$\widehat{\mathcal{A}}  $ {as well as $\widehat{\mathcal{A}}_N  $ are}
consistent estimator{s}  of $\mathcal{A}.$

\end{proposition}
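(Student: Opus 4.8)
The plan is to reduce the claim about the operators $\widehat{\mathcal{A}}$ and $\widehat{\mathcal{A}}_N$ to the already-established consistency of the component estimators $\widehat{\mathcal{T}}^{(i)}_N$. First I would recall what ``consistent estimator of $\mathcal{T}^{(i)}$'' means here: for a fixed test function $g$ in the relevant Stein class and a fixed argument $x$ (equivalently, for $g$ ranging over the unit ball of the RKHS, uniformly), $\widehat{\mathcal{T}}^{(i)}_N g(x) \to \mathcal{T}^{(i)} g(x)$ in probability as $N \to \infty$. Since
\[
\widehat{\mathcal{T}}^{(i)}_N g(x) - \mathcal{T}^{(i)} g(x) = g(x)\bigl(\widehat{s}^{(i)}_N(x) - s^{(i)}(x)\bigr),
\]
this is immediate from the assumed consistency of $\widehat{s}^{(i)}_N$ together with boundedness of $g$ (and of $k$, hence of any $g$ in $\B_1(\H)$), exactly as in the proof of Proposition~\ref{prop:Stein operators}; I would simply cite that argument. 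This gives the first assertion.

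Next I would handle $\widehat{\mathcal{A}}_N$. By construction $\widehat{\mathcal{A}}_N g(\x) - \mathcal{A} g(\x) = \frac1m \sum_{i\in[m]} \bigl(\widehat{\mathcal{T}}^{(i)}_N g_{x^{(-i)}}(x^{(i)}) - \mathcal{T}^{(i)} g_{x^{(-i)}}(x^{(i)})\bigr)$, a finite average of $m$ terms each of which tends to $0$ in probability by the first part; since $m$ is fixed, the average tends to $0$ in probability as well (triangle inequality, or a union bound on the $m$ events $\{|\cdot| > \varepsilon/m\}$). Here I would be slightly careful that the coordinate functions $g_{x^{(-i)}}$ inherit the required differentiability and boundedness from $g$, which they do since fixing coordinates preserves both; and that the convergence can be taken uniformly over $g \in \B_1(\H)$, which again follows from the RKHS reproducing property $g_{x^{(-i)}}(x) = \langle g, k_{x^{(-i)}}(x,\cdot)\rangle_\H$ and boundedness of $k$, so that $|g_{x^{(-i)}}(x)| \le \sqrt{k(\x,\x)}$ is uniformly bounded.

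For $\widehat{\mathcal{A}}$, which uses a single randomly chosen index $I$, I would argue by conditioning on $I$: conditionally on $\{I = i\}$ we have $\widehat{\mathcal{A}} g(\x) = \widehat{\mathcal{T}}^{(i)}_N g_{x^{(-i)}}(x^{(i)})$, which converges in probability to $\mathcal{T}^{(i)} g_{x^{(-i)}}(x^{(i)})$; averaging over the $m$ equally likely values of $I$ and using that $I$ is independent of the sample (hence of $\widehat{s}^{(i)}_N$) gives convergence in probability of $\widehat{\mathcal{A}} g(\x)$ to $\mathcal{A} g(\x) = \frac1m\sum_i \mathcal{T}^{(i)} g_{x^{(-i)}}(x^{(i)})$. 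Alternatively, one can observe $\mathbbm{E}_I[\widehat{\mathcal{A}} g] = \widehat{\mathcal{A}}_N g$ and that the extra randomness from $I$ contributes a term that is $O_P(1)$ in magnitude but whose conditional mean is the already-controlled $\widehat{\mathcal{A}}_N g$; since we only claim convergence in probability (not almost surely or in a stronger mode), conditioning is the cleanest route.

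I do not expect any serious obstacle: the statement is essentially a bookkeeping extension of Proposition~\ref{prop:Stein operators}, with the only mild subtlety being to state precisely in which sense ``consistent estimator of an operator'' is meant (pointwise in $(g,\x)$, or uniformly over $g\in\B_1(\H)$ for fixed $\x$) and to check that passing from the full-sum operator $\widehat{\mathcal{A}}_N$ to the single-index operator $\widehat{\mathcal{A}}$ is harmless because $m$ is fixed and $I$ is independent of the data. The main thing to get right is the uniformity-over-$g$ point, since that is what is actually needed downstream for the NP-KSD convergence in Theorem~\ref{th:KSDconv}; I would make that explicit but it follows routinely from boundedness of the kernel.
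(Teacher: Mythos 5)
Your argument is correct and follows essentially the same route as the paper's proof: the pointwise identity $\widehat{\mathcal{T}}^{(i)}_N g(x) - \mathcal{T}^{(i)} g(x) = g(x)\bigl(\widehat{s}^{(i)}_N(x) - s^{(i)}(x)\bigr)$ combined with consistency of the score estimator gives consistency of $\widehat{\mathcal{T}}^{(i)}_N$, and the claims for $\widehat{\mathcal{A}}$ and $\widehat{\mathcal{A}}_N$ then follow from their definitions as (an average over, or a random choice of) the finitely many coordinate operators, exactly as the paper asserts. Your additional remarks on uniformity over $g \in \B_1(\H)$ via boundedness of the kernel and on conditioning on the independent index $I$ merely spell out details the paper leaves implicit, and do not constitute a different method.
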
 

\begin{proof}
Take a fixed $x$.
As $\widehat{s}_{ N}^{(i)}$ is a consistent estimator of $s^{(i)}$, it holds that for any $\epsilon>0$ {and for any $x$ in the range of $s^{(i)}$},
$$\mathbb{P}( | \widehat{s}_{ N}^{(i)} {(x)}
 - {s}^{(i)}(x) | > \epsilon) \rightarrow 0$$
as $N\rightarrow \infty$. Here $\omega$  denotes the random element for the estimation, which is implicit in $\widehat{q}_{ N}$. 
On the set 
$$A_\epsilon = \left\{| \widehat{s}_{ N}^{(i)} {(x)} - {s}^{(i)}(x) |\le \epsilon 
\right\}$$ 
we have that 
$$
| {\widehat{\mathcal{T}}}^{
{(i)}}_{N} g (x)
- {{\mathcal{T}}}^{(i)} g (x)| \le \epsilon f(x) . 
$$
For every fixed $x$ this expression tends to 0 as $\epsilon \rightarrow 0$.
Hence consistency of $\widehat{T}_{N}$ follows. The last {two} assertions follow  immediately from Eq.\,(\ref{summ}) {and Eq.\,(\ref{eq:steinfinal})}. 
\end{proof}

\subsection{Asymptotic behaviour of NP-KSD}\label{app:asymp}

{Here we assess the asymptotic behaviour of NP-KSD$^2$. 
{{W}ith $\boldsymbol s_t$ denoting the conditional score function, 
$$ \NPKSD^2_t(G \| p) 
= \E_{\x, \x' \sim p} 
\left\langle \mathcal{A}_{Q_t} k(\x, \cdot),  \mathcal{A}_{Q_t} k(\x', \cdot)\right\rangle_\mathcal{H} 
$$
{where $ \mathcal{A}_{Q_t}k(\x, \cdot) = \mathcal{A}_{t}k(\x, \cdot)$ can be written as } 
{\begin{eqnarray}
\label{eq:decomp_mv}
    \mathcal{A}_{t} k(\x, \cdot)
&= &\frac{1}{m} \sum_{{i \in [m]}} \left\{  \mathcal{A}_{\widehat{Q}^{(i)}_t} k(\x, \cdot) + k(\x, \cdot) (\widehat{s}^{(i)}_{t,{N}} - s^{(i)}_t) \right\} \nonumber \\ &=& { \frac{1}{m} \sum_{i \in [m]}}\left\{  \frac{\partial }{\partial x^{(i)}}k(\x, \cdot) + k(\x, \cdot) {\widehat s}^{(i)}_{t,{N}} + k(\x, \cdot) (\widehat{s}^{(i)}_{t,{N}} - s^{(i)}_t) \right\}. 
\end{eqnarray}}
}
Recall that KSD$^2$ is given in Eq.\,(\ref{eq:KSDequiv}) by  
$$
\mathrm{KSD}^2(q\|p) = {\E}_{\x,\tilde{\x} \sim p} [\langle \T_q k(\x,\cdot), \T_q k(\tilde \x,\cdot)\rangle_{\H}],
$$
where $\KSD(q \| p)$  is a deterministic quantity which under weak assumption vanishes when $p=q$.}
{Moreover, 
$$
{\KSD}_t^2(q_t\|p) 
= {\E}_{\x,\tilde{\x} \sim p} [\langle \A_t k(\x,\cdot), \widehat \A_t k(\tilde \x,\cdot)\rangle_{\H} . 
$$
Disentangling this expression in general is carried out using Eq.\,(\ref{summ2}).

\begin{remark} For Gaussian kernels $k=k_G$ used in this paper, we can exploit its factorisation; 
\begin{eqnarray*}
k_G(\x, \tilde{\x} ) &=& \exp \left\{ - \frac{1}{2 \sigma^2} \sum_{i=1}^m \left(x^{(i)} - {\tilde{x}}^{(i)} \right)^2  \right\} 
= \prod_{i=1}^m \exp \left\{ - \frac{1}{2 \sigma^2}  \left(x^{(i)} - {\tilde{x}}^{(i)} \right)^2  \right\}.
\end{eqnarray*}
In this situation, taking $g_{{\x}} (\cdot) = k_G(\x, \cdot )$,
with $\cdot$ denoting an element in $\R^m$, 
gives 
$$g_{{{\x}}; x^{(-i)}} ({\cdot})  
= \exp \left\{ - \frac{1}{2 \sigma^2} \sum_{j: j \ne i}^m \left(x^{(j)} - (\cdot)^{(j)} \right)^2  \right\} \exp \left\{ - \frac{1}{2 \sigma^2}  \left(x^{(i)} -  (\cdot)^{(i)}\right)^2  \right\}. $$
For the operator ${\mathcal{T}}_q^{(i)}$ in Eq.\,(\ref{opform}) we have 
\begin{eqnarray*}
{\mathcal{T}}_q^{(i)} g_{x^{(-i)}} ({\cdot})  
&=& {\exp \left\{ - \frac{1}{2 \sigma^2} \sum_{j=1}^m \left( x^{(j)} - (\cdot)^{(j)} \right)^2  \right\} }    \left( \frac{1}{\sigma^2} (x^{(i)} -  (\cdot)^{(i)}) 
+ 
(\log q_{t(x^{(-i)})})' (x^{(i)}) \right). 
\end{eqnarray*} 
Thus, the operator $\A_t$ decomposes as 
{
\begin{eqnarray*} \frac1m \sum_{i=1}^m
{\mathcal{T}}_q^{(i)} g_{x^{(-i)}} ({\cdot}) 
&=&  \exp \left\{ - \frac{1}{2 \sigma^2} \sum_{j=1}^m \left(x^{(j)} - (\cdot)^{(j)} \right)^2  \right\} \\
&& \frac1m \sum_{i=1}^m \left\{  \frac{1}{\sigma^2}  \left(x^{(i)} -  (\cdot)^{(i)}\right) 
+ 
(\log q_{t(x^{(-i)})})' (x^{(i)}) \right\}. 
\end{eqnarray*} 
}

\end{remark} 
}

{For consistency, in our setting the Stein operators are only applied to the observations $\z_1, \ldots, \z_n$ and hence for our applications pointwise consistent estimation suffices, in the sense that 
that for $i=1, \ldots, m$, {$\widehat{s}_{t,N}^{(i)} = \widehat{s}_{t,N}^{(i)} (x^{(i)})$} is a consistent estimator of the {uni-variate} score function  $ s_t^{(i)} = \{\log (q(x^{(i)})| t (x^{{(-i)}})\}'$. Score matching estimators often satisfy not only consistency but also asymptotic normality, see for example  \cite{song2020sliced}. Such an assumption is required for Theorem \ref{th:KSDconv}; recall that we use the notation 
$ \hat{\boldsymbol s}_{t,N}= (\hat{s}_{t,N} (x^{(i)}), i \in [m]) $.} 
{To prove Theorem \ref{th:KSDconv} we re-state it  for convenience.}

{
\begin{theorem}\label{th:KSDconvb} 
Assume that the score function estimator vector $ \hat{s}_{t,N}$ is asymptotically normal with mean $0$ and covariance  matrix $N^{-1} \Sigma_{s}$. Then 
${\NPKSD}_t^2(G\|p)$ converges in probability to ${\KSD}_t^2(q_t\|p)$ at rate at least $\min(B^{-\frac12}, N^{-\frac12})$. 
\end{theorem}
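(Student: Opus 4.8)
The plan is to control the difference $\NPKSD_t^2(G\|p) - \KSD_t^2(q_t\|p)$ by decomposing it into two independently controllable error sources: the re-sampling approximation of the coordinate average (governed by $B$), and the score-estimation error (governed by $N$). First I would write both quantities in their quadratic (V-statistic-free) form using Eq.\,(\ref{eq:NPKSDequiv}) and Eq.\,(\ref{eq:ksdt}), so that
\[
\NPKSD_t^2(G\|p) - \KSD_t^2(q_t\|p)
= \E_{\x,\tilde\x\sim p}\Big[ \big\langle \widehat\A^B_{t,N} k(\x,\cdot),\widehat\A^B_{t,N} k(\tilde\x,\cdot)\big\rangle_\H
- \big\langle \A_t k(\x,\cdot),\A_t k(\tilde\x,\cdot)\big\rangle_\H\Big].
\]
Since the observed sample size $n$ is fixed and the kernel $k$ is bounded, the outer expectation over $p$ can be handled by dominated convergence once the integrand is controlled pointwise, so the task reduces to a pointwise bound on $\|\widehat\A^B_{t,N} k(\x,\cdot) - \A_t k(\x,\cdot)\|_\H$ for fixed $\x,\tilde\x$.

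Next I would split this operator difference through the intermediate object $\widehat\A_{t,N}$ (the full coordinate average with estimated scores, no re-sampling), writing
\[
\widehat\A^B_{t,N} k(\x,\cdot) - \A_t k(\x,\cdot)
= \big(\widehat\A^B_{t,N} - \widehat\A_{t,N}\big) k(\x,\cdot)
+ \big(\widehat\A_{t,N} - \A_t\big) k(\x,\cdot),
\]
mirroring the decomposition already used in Eq.\,(\ref{eq:npdecomp}) and Eq.\,(\ref{eq:decomp_mv}). For the first term, $\widehat\A^B_{t,N}$ is an average of $B$ i.i.d.\ draws (with replacement) from $\{\widehat\T^{(i)}_{t,N} k_{x^{(-i)}}(x^{(i)})\}_{i\in[m]}$ whose mean is exactly $\widehat\A_{t,N} k(\x,\cdot)$; since the kernel and its coordinate derivatives are bounded and the estimated scores are (asymptotically) bounded in probability, a variance computation for sampling without... with replacement gives $\E\|(\widehat\A^B_{t,N} - \widehat\A_{t,N}) k(\x,\cdot)\|_\H^2 = O(B^{-1})$, hence an $O_p(B^{-1/2})$ contribution. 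For the second term, the only discrepancy between $\widehat\A_{t,N}$ and $\A_t$ is the substitution of $\widehat s^{(i)}_{t,N}$ for $s^{(i)}_t$ in each summand, so $(\widehat\A_{t,N} - \A_t) k(\x,\cdot) = \frac1m\sum_i k(\x,\cdot)\big(\widehat s^{(i)}_{t,N}(x^{(i)}) - s^{(i)}_t(x^{(i)})\big)$; the assumed asymptotic normality of $\hat{\boldsymbol s}_{t,N}$ with covariance $N^{-1}\Sigma_s$ makes this term $O_p(N^{-1/2})$, with the limiting randomness being a linear image of the Gaussian score-error vector. Combining the two bounds via the triangle inequality and Cauchy--Schwarz in the bilinear form yields the rate $\min(B^{-1/2},N^{-1/2})$, and expanding the cross terms exhibits the mixture-of-normals structure mentioned after the theorem statement (the $N^{-1/2}$ term contracts a Gaussian against the random observation locations $\z_i$).

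The main obstacle I anticipate is making the score-estimation term rigorous without imposing uniform control on $\widehat s^{(i)}_{t,N}$ over all of $\mathbb{R}$: the cleanest route, and the one consistent with the remark in App.\,\ref{app:proofs} that ``the Stein operators are only applied to the observations $\z_1,\ldots,\z_n$'', is to note that we only ever evaluate the scores at the fixed finite set of points $\{z_i^{(j)}\}$, so pointwise asymptotic normality at those finitely many arguments suffices and the delta-method / continuous-mapping argument goes through coordinatewise. A secondary subtlety is that $\widehat\A^B_{t,N}$ depends on the \emph{same} estimated score $\widehat s_{t,N}$ across both the re-sampling and the score-error terms, so the two error sources are not independent; this is harmless for a rate statement since we only use the triangle inequality, but it is why the limiting law is a mixture (conditionally Gaussian given the re-sampling indices and the score-error realisation) rather than a single Gaussian, and I would state it at that level of precision rather than claiming a clean Gaussian limit.
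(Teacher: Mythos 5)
Your proposal matches the paper's proof in all essentials: the same decomposition of $\widehat{\A}^B_{t,N}-\A_t$ through the intermediate operator $\widehat{\A}_{t,N}$, the $O_p(B^{-1/2})$ fluctuation from the multinomial re-sampling, the $O_p(N^{-1/2})$ score-estimation term obtained from the assumed asymptotic normality, and even the remarks on the mixture-of-normals limit and on needing the scores only at the observed points. The only difference is bookkeeping: the paper expands the quadratic form term by term and invokes normal-approximation results for the bootstrap sums to quantify each piece, whereas you bound the operator difference in RKHS norm and conclude via the triangle inequality and Cauchy--Schwarz, which delivers the same rate.
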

}
{
{\bf{Proof.}}
We have from Eq.\,(\ref{eq:KSDequiv})
$$
{\NPKSD}_t^2(G\|p) = {\E}_{\x,\tilde{\x} \sim p} [\langle \widehat \A^B_{t,N} k(\x,\cdot), \widehat \A^B_{t,N} k(\tilde \x,\cdot)\rangle_{\H}. 
$$
Expanding this expression, with $\A_t$ denoting the score Stein operator in Eq.\,\ref{summ2} for the conditional distribution $q_t$, 
\begin{eqnarray*}
{\NPKSD}_t^2(G\|p)&=&  {\E}_{\x,\tilde{\x} \sim p} [\langle \A_t k(\x,\cdot),  \A_t k(\tilde \x,\cdot)\rangle_{\H} \\
&&+  {\E}_{\x,\tilde{\x} \sim p} [\langle ( \widehat \A^B_{t,N}  -\A_t) k(\x,\cdot),  \A_{t} k(\tilde \x,\cdot)\rangle_{\H}
\\
&&+  {\E}_{\x,\tilde{\x} \sim p} [\langle\A_{t} k(\x,\cdot), ( \widehat \A^B_{t,N} -\A_t) k(\tilde \x,\cdot)\rangle_{\H}
\\
&&+  {\E}_{\x,\tilde{\x} \sim p} [\langle  (\widehat \A^B_{t,N} - \A_t) k(\x,\cdot), \widehat (\A^B_{t,N} -\A_t) k(\tilde \x,\cdot)\rangle_{\H}\\
&=& \KSD^2(q_t\| p) + 2 \,  {\E}_{\x,\tilde{\x} \sim p} [\langle ( \widehat \A^B_{t,N}  -\A_t) k(\x,\cdot), \A_t k(\tilde \x,\cdot)\rangle_{\H}\\
&&+  {\E}_{\x,\tilde{\x} \sim p} [\langle  (\widehat \A^B_{t,N} - \A_t) k(\x,\cdot), (\widehat \A^B_{t,N} -\A_t) k(\tilde \x,\cdot)\rangle_{\H}
\end{eqnarray*}
where we used the symmetry of the inner product in the last step. Now, for any function $g$ for which the expression is defined, 
\begin{eqnarray}\label{eq:sumofoper}
(\widehat \A^B_{t,N} - \A_t) g(\x)
&=& (\widehat \A^B_{t,N} - \widehat \A_{t,N}) g(\x) + 
(\widehat \A_{t,N} - \A_t) g(\x)
\end{eqnarray}
{recalling that} ${\widehat \A}_{t,N}$ is the Stein operator using the estimated conditional score function $\hat{s}_{t,N}$ with the estimation based on $N$ synthetic observations.}

{To analyse Eq.\,\ref{eq:sumofoper} we first consider $(\widehat \A_{t,N} - \A_t) g(\x)$;
\begin{eqnarray}\label{eq:expansion} 
{\widehat{\mathcal{A}}_{t,N}} g(\x)
- \mathcal{A}_{t} g(\x) = \frac1m \sum_{i=1}^m g(x^{(i)}) (\widehat{s}_{t,N}^{(i)}   (x^{(i)})- s_t^{(i)} (x^{(i)})) . 
\end{eqnarray}
{We note that} it suffices to 
assume that for $i=1, \ldots, m$, {$\widehat{s}_{t,N}^{(i)} = \widehat{s}_{t,N}^{(i)} (x^{(i)})$} is a consistent estimator of the {uni-variate} score function  $ s_t^{(i)} = \{\log (q(x^{(i)})| t (x^{(-i)})\}'$. Score matching estimators often satisfy not only consistency but also asymptotic normality, see for example  \cite{song2020sliced}.}  
{If for $x^{(1)}, \ldots, x^{(m)}$ the vector $ \hat{\boldsymbol s}_{t,N}= (\hat{s}_{t,N} (x^{(i)}), i \in [m]) $ is asymptotically normal with mean $0$ and covariance  matrix $N^{-1} \Sigma_{s}$ then it follows from Eq.\,\ref{eq:expansion} that, asymptotically, 
$\sqrt{N}  (\widehat{\mathcal{A}}_{t,N} g(\x)
- \mathcal{A}_{q_t} g(\x))$ has a multivariate normal distribution and,  in particular, $(\widehat{\mathcal{A}}_{t,N} g(\x)
- \mathcal{A}_{q_t} g(\x))$ has fluctuations of the order $N^{-\frac12}$. 
} 

{For the  term $(\widehat \A^B_{t,N} - \widehat \A_{t,N}) g(\x)$ in Eq.\,\ref{eq:sumofoper}
we have 
\begin{eqnarray*}
(\widehat \A^B_{t,N} - \widehat \A_{t,N}) g(\x)
&=& \frac1B \sum_{b=1}^B {{\mathcal{T}}_{t,N}}^{(i_b)} g(x) - {\T_{t,N}} g(\x)\\
&=& \sum_{i=1}^m 
\left\{\frac1B \sum_{b=1}^B {{\mathcal{T}}_{t,N}}^{(i_b)} g(\x) \mathds{1}(i_b = i) - \frac1m {{\mathcal{T}}_{t,N}}^{(i)} g(\x) \right\} .
\end{eqnarray*}
Let $k_i = \sum_{b=1}^B {\mathds{1}}(i_b = i)$ the number of times that $i$ is re-sampled. Then $ \mathbb{E}(k_i) = \frac{B}{m}$ and we have 
\begin{eqnarray*}
(\widehat \A^B_{t,N} - \widehat \A_{t,N}) g(\x)
&=& \sum_{i=1}^m {\mathcal{T}}_{t,N}^{(i)} g(\x) 
\left\{\frac1B k_i - \frac1m \right\} \\
&=& \frac1B \sum_{i=1}^m {\mathcal{T}}_{t,N}^{(i)} g(\x) 
\left\{ k_i -\mathbb{E}(k_i)\right\} 
.
\end{eqnarray*}
This term is known to be approximately mean zero normal with finite variance  $\Sigma(\hat{\boldsymbol s}_{t,N}; g)$ (which depends on $\hat{\boldsymbol s}_{t,N}$ and $g$)  of order $B^{-1}$, 
see for example \cite{holmes2004stein}, where an explicit bound on the distance to normal is provided. This asymptotic normality holds for the operator given the estimated conditional score function. As the bootstrap samples are drawn independently of the score function estimator, without conditioning, the unconditional distribution is a mixture of normal distributions. For an estimator $\hat{\boldsymbol s}_{t,N}$ which is asymptotically normally distributed, the variances  $\Sigma(\hat{\boldsymbol s}_{t,N}; g)$ will converge to  $\Sigma(\boldsymbol{s}_t; g)$.} 

{Thus, with Eq.\,\ref{eq:sumofoper}, 
\begin{eqnarray*}
\lefteqn{{\E}_{\x,\tilde{\x} \sim p} [\langle ( \widehat \A^B_{t,N}  -\A_t) k(\x,\cdot), \A_t k(\tilde \x,\cdot)\rangle_{\H}] }\\ 
&=& {\E}_{\x,\tilde{\x} \sim p} 
[\langle  (\widehat \A^B_{t,N} - \widehat \A_{t,N}) k(\x,\cdot), \A_t k(\tilde \x,\cdot)\rangle_{\H} ] 
+ {\E}_{\x,\tilde{\x} \sim p} 
[\langle (\widehat \A_{t,N} - \A_t)k(\x,\cdot), \A_t k(\tilde \x,\cdot)\rangle_{\H} ]
\end{eqnarray*}
with the first term is approximately a variance mixture of mean zero normals tending to $0$ in probability at rate at least $B^{-\frac12} $ as $B \rightarrow \infty$, and the second term approximately a mean zero normal variable tending to 0 in probability at rate at least $N^{-\frac12}$ as $N \rightarrow \infty$.}

{It remains to consider 
$$  {\E}_{\x,\tilde{\x} \sim p} [\langle  (\widehat \A^B_{t,N} - \A_t) k(\x,\cdot), (\widehat \A^B_{t,N} -\A_t) k(\tilde \x,\cdot)\rangle_{\H}.$$ 
With Eq.\,\ref{eq:sumofoper} we have
\begin{eqnarray}
\lefteqn{ {\E}_{\x,\tilde{\x} \sim p} [\langle  (\widehat \A^B_{t,N} - \A_t) k(\x,\cdot), (\widehat \A^B_{t,N} -\A_t) k(\tilde \x,\cdot)\rangle_{\H}} \nonumber \\
 &=&   {\E}_{\x,\tilde{\x} \sim p} [\langle  (\widehat \A^B_{t,N} - \widehat \A_{t,N}) k(\x,\cdot), (\widehat \A^B_{t,N} -\widehat \A_{t,N}) k(\tilde \x,\cdot)\rangle_{\H} \label{term1} \\
 &&+  {\E}_{\x,\tilde{\x} \sim p} [\langle  (\widehat \A^B_{t,N} - \widehat \A_{t,N}) k(\x,\cdot), (\widehat \A_{t,N} -\A_t) k(\tilde \x,\cdot)\rangle_{\H} \label{term2}\\
 &&+  {\E}_{\x,\tilde{\x} \sim p} [\langle  (\widehat \A_{t,N} - \A_t) k(\x,\cdot), (\widehat \A^B_{t,N} -\widehat \A_{t,N}) k(\tilde \x,\cdot)\rangle_{\H} \label{term3}
 \\
 &&+  {\E}_{\x,\tilde{\x} \sim p} [\langle  (\widehat \A_{t,N} - \A_t) k(\x,\cdot), (\widehat \A_{t,N} -\A_t) k(\tilde \x,\cdot)\rangle_{\H}. \label{term4}
\end{eqnarray}
}

In \cite{xu2021stein}, Proposition 2, the following result is shown, using the notation as above. 

Let    
$$Y  = \frac{1}{B^2}  \sum_{s, t \in {[m]} } ( k_s k_t - \E (k_s k_t) ) h_x (s, t).$$
Assume that  $h_x$ is  bounded and that $Var(Y)$ is non-zero. 
Then if $Z$ is mean zero normal with variance $Var(Y)$, there is an {explicitly computable}  constant $C>0$ such that for all three times  continuously differentiable functions $g$ with bounded derivatives up to order 3, 
$$
| \E [ g(Y) ] - \E [g(Z)] \le \frac{C}{B}. 
$$ 
{Moreover, using Equations (17)-(21) from \cite{ouimet2021general}, it is easy to see that $Var(Y)$ is of the order $B^{-1}$. Hence, Term (\ref{term1}) tends to  0 in probability at rate at least $B^{-1}$. Similarly, using that the bootstrap sampling is independent of the score function estimation, Terms (\ref{term2}) and  (\ref{term3}) tend to  0 in probability at rate at least $(NB)^{-\frac12}$. For Term (\ref{term4}), from Eq.\,(\ref{eq:expansion}),
\begin{eqnarray*} 
\lefteqn{ {\E}_{\x,\tilde{\x} \sim p} [\langle  (\widehat \A_{t,N} - \A_t) k(\x,\cdot), (\widehat \A_{t,N} -\A_t) k(\tilde \x,\cdot)\rangle_{\H}]}\\
&=& \frac{1}{m^2} \sum_{i=1}^m \sum_{j=1}^m 
{\E}_{\x,\tilde{\x} \sim p} [\langle (\widehat{s}^{{(i)}}_{t,N}  (x^{{(i)}})- s^{{(i)}}_t (x^{{(i)}})) k(x^{{(i)}},\cdot), (\widehat{s}^{{(j)}}_{t,N}  (x^{{(j)}})- s^{{(j)}}_t (x^{{(j)}})) k(x_j,\cdot)\rangle_{\H}].
\end{eqnarray*}
If $\hat{\boldsymbol s}_{t,N}$ is approximately normal as hypothesised, then the inner product is approximately a covariance of order $N^{-1}$, and hence the overall contribution from 
Term (\ref{term4}) is of order at most $N^{-1}$. This finishes the proof. \hfill $\Box$
}


\section{Additional experimental details and results}\label{app:exp}

\subsection{Additional experiments}

\begin{table}[t!]
\centering
    \begin{tabular}{c|cccc|c}
    \toprule
      {NP-KSD} & B=5 & B=10 & B=20 & B=40 & (MMDAgg) \\ 
      \midrule
      Runtime(s) & 4.65 & 6.56 & 8.43 & 10.44 & 5.02 \\ \hline 
    Rejection Rate & 0.24 & 0.40 & 0.51 & 0.55 & 0.27 \\
    \bottomrule
    \end{tabular}
    \vspace{0.3cm}
  \caption{Computational runtime for various re-sample size B: observed sample size $n=100$; bootstrap size  $b=200$; dimension $m=40$. {The r}ejection rate {is used for} 
  power comparison; {higher rejection rates indicate higher power}.}    \label{tab:runtime}
  \end{table}
\paragraph{Runtime}
The computational runtime for each tests are shown in \cref{tab:runtime}. \textbf{MMDAgg} runtime is also shown as a comparison. 
From the result, we can see that
NP-KSD runs generally slower than permutation-based test, i.e. \textbf{MMDAgg}. This is mainly due to 
the learning of conditional score functions and the Monte-Carlo based  bootstrap procedure. As the re-sample size $B$ increase, NP-KSD test requires longer runtime. However, the rejection rate  $B=20$ is approaching to that of $B=40$ ( similiar observations also shown in \ref{fig:MoG_resample}). \textbf{MMDAgg} generally has faster computation due to permutation procedure of the test. However, it has lower test power, which is only comparable to that of $B=5$ at which the runtime advantage is not that obvious.

\paragraph{Training 
on synthetic distributions}
We also train the deep generative models on the synthetic distributions studied in {Section \ref{sec:exp}} 
and perform model assessment on the trained models. We consider the 
standard Gaussian and Mixture of two-component Gaussian problems. We train a generative adversarial network with multi-layer perceptron (\textbf{GAN\_MLP})\footnote{DCGAN studied in the main text is particularly useful for the (high-dimensional) image dataset due to the convolutional neural network (CNN) layers{; DCGAN} 
is not applicable for the problem in $\R^2$.} and a variational auto-encoder (VAE) \citep{kingma2013auto}.
Noise-constrastive score network \textbf{NCSN} is also trained to learn the score function followed by annealed Langevin dynamics \citep{song2019generative, song2020improved}. The trainings are done via Adam optimiser \citep{kingma2014adam} with adaptive learning rate. The rejection rates are reported in \cref{tab:train_synthetic}.

As shown in \cref{tab:train_synthetic}, uni-modal Gaussian distribution is easier to be learned by the generative modelling procedures, as compared to the two-component Mixture of Gaussian (MoG). As a result, the $\NPKSD$\_m testing procedure shows higher rejection rate on trained MoG generative models compared to that of Gaussian.\footnote{We note that $\NPKSD$ and $\NPKSD$\_m with  summary statistics {taken to be the mean are}
equivalent in the two-dimensional problem.}.
However, as these deep models are not designed for training and sampling the simple low-dimensional distribution, it is not surprising the procedure produce samples that not pass the $\NPKSD$ tests.

Inspired from the settings in \citet{gorham2017measuring}, where KSD {is} used to measure sample quality, we also apply $\NPKSD$ tests on the Stochastic Gradient Langevin Dynamics (SGLD) \citep{welling2011bayesian} sampling procedure studied in \citet{gorham2017measuring}; {in \citet{gorham2017measuring}, SGLD is referred to as 
Stochastic Gradient Fisher Scoring (SGFS).} SGLD is capable of sampling uni-modal distributions, while it can have problems sampling multi-modal data. {T}he rejection rates{ shown} in \cref{tab:train_synthetic} 
{are} slightly higher 
than {the} test level for MoG, while {the} type-I error is well controlled for the Gaussian case. Generated samples from SGLD are visualised \cref{fig:sgld_samples}, {illustrating that the SGLD samples look plausible for the Gaussian model, but less so for the MoG model.} 

\begin{table}[h!]
\centering
\vspace{0.35cm}
    \begin{tabular}{c|ccc|cc}
    \toprule
  {} & GAN\_MLP & VAE & NCSN & SGLD & Real \\ 
      \midrule
  Gaussian & 0.36 & 0.61 & 0.25 & 0.06 & 0.03\\
  \midrule
  MoG & 0.78 & 0.92 & 0.45 & 0.12 & 0.04 \\
    \bottomrule
    \end{tabular}
  \vspace{0.35cm}
\caption{$\NPKSD$\_m rejection rate: observed sample size $n=100$; bootstrap size is $200$. {Here a low rejection rate indicates a good type-1 error. NCSN performs best, among deep generative models, on both tasks but still has a very high rejection rate. {SGLD outperforms the deep generative models.}}}    \label{tab:train_synthetic}
  \end{table}

\begin{figure}[htp!]
    \centering
    \includegraphics[width=0.66\textwidth]{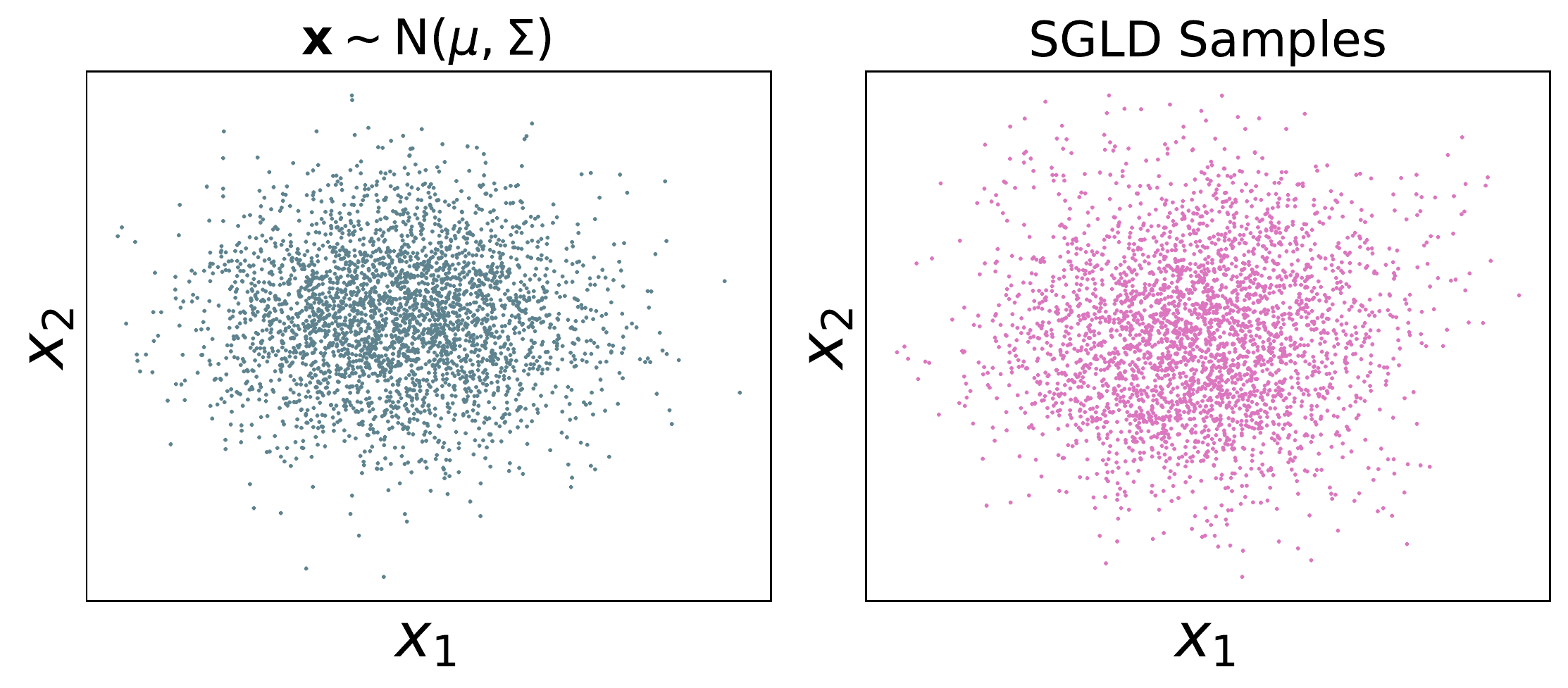} 
    
    \vspace{0.5cm}
    \includegraphics[width=0.66\textwidth]{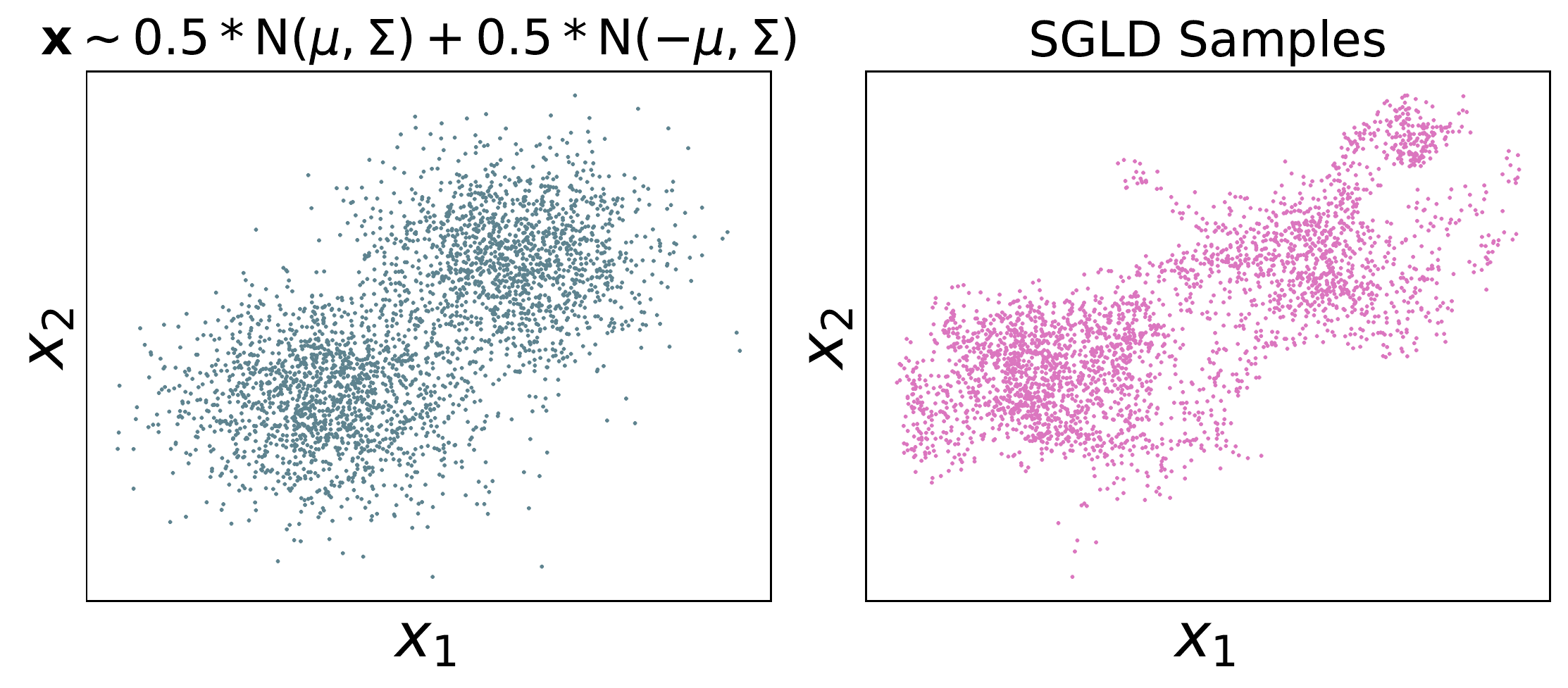}
    \caption{Visualisation of  samples generated from Stochastic Gradient Langevin Dynamics (SGLD); {top: Gaussian model, bottom: MoG.}}
    \label{fig:sgld_samples}
\end{figure}

\newpage
\subsection{Data visualisation}\label{app:visualisation}
We show samples from the MNIST and CIFAR10 dataset, together with 
samples from 
trained generative models, in \cref{fig:exp_mnist} and \cref{fig:cifar10},  respectively.

\begin{figure}[htp!]
\centering
    \subfigure[Real MNIST samples
    ]{
    \includegraphics[width=0.4\textwidth]{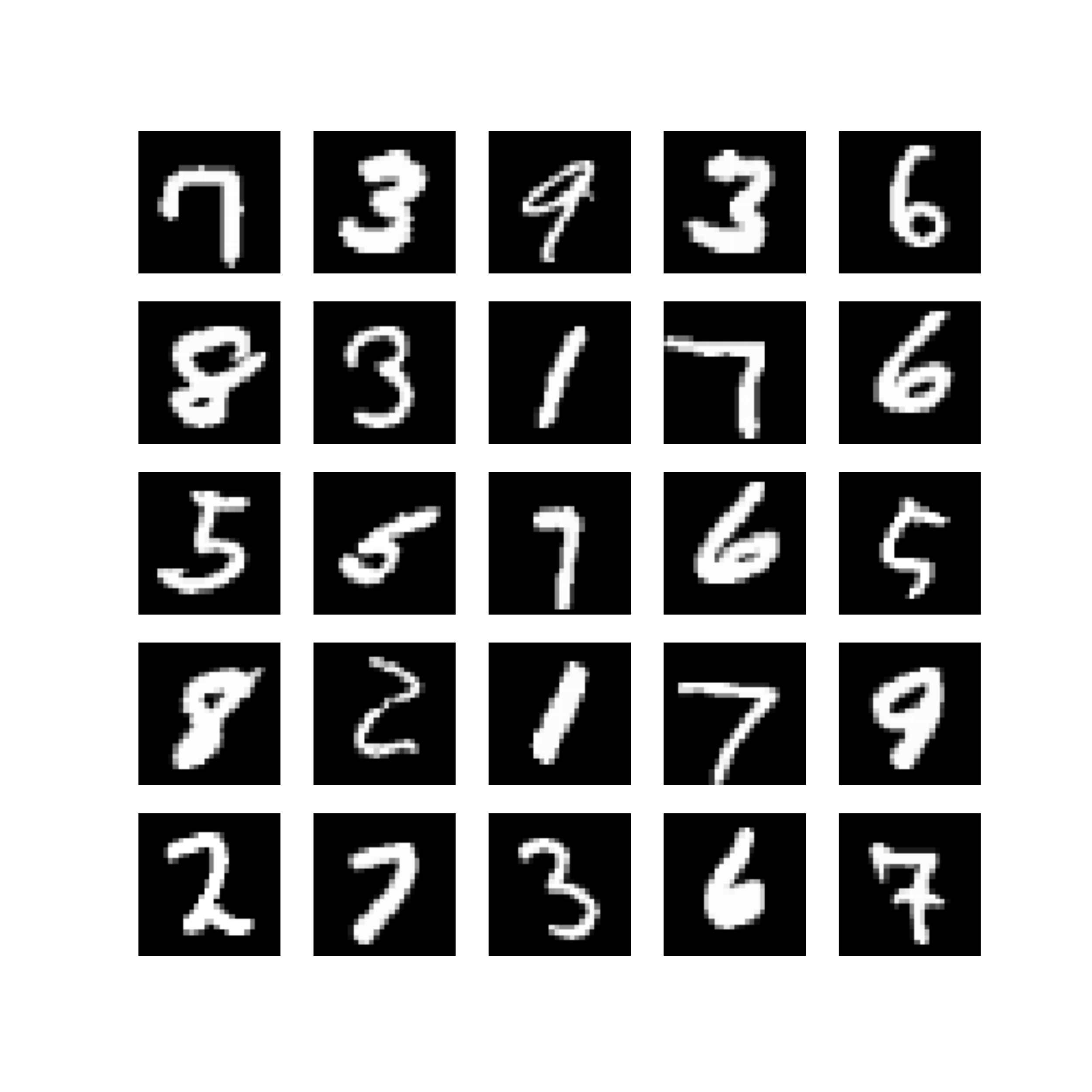}\label{fig:mnist_real}}

        \subfigure[NCSN samples
    ]{
    \includegraphics[width=0.4\textwidth]{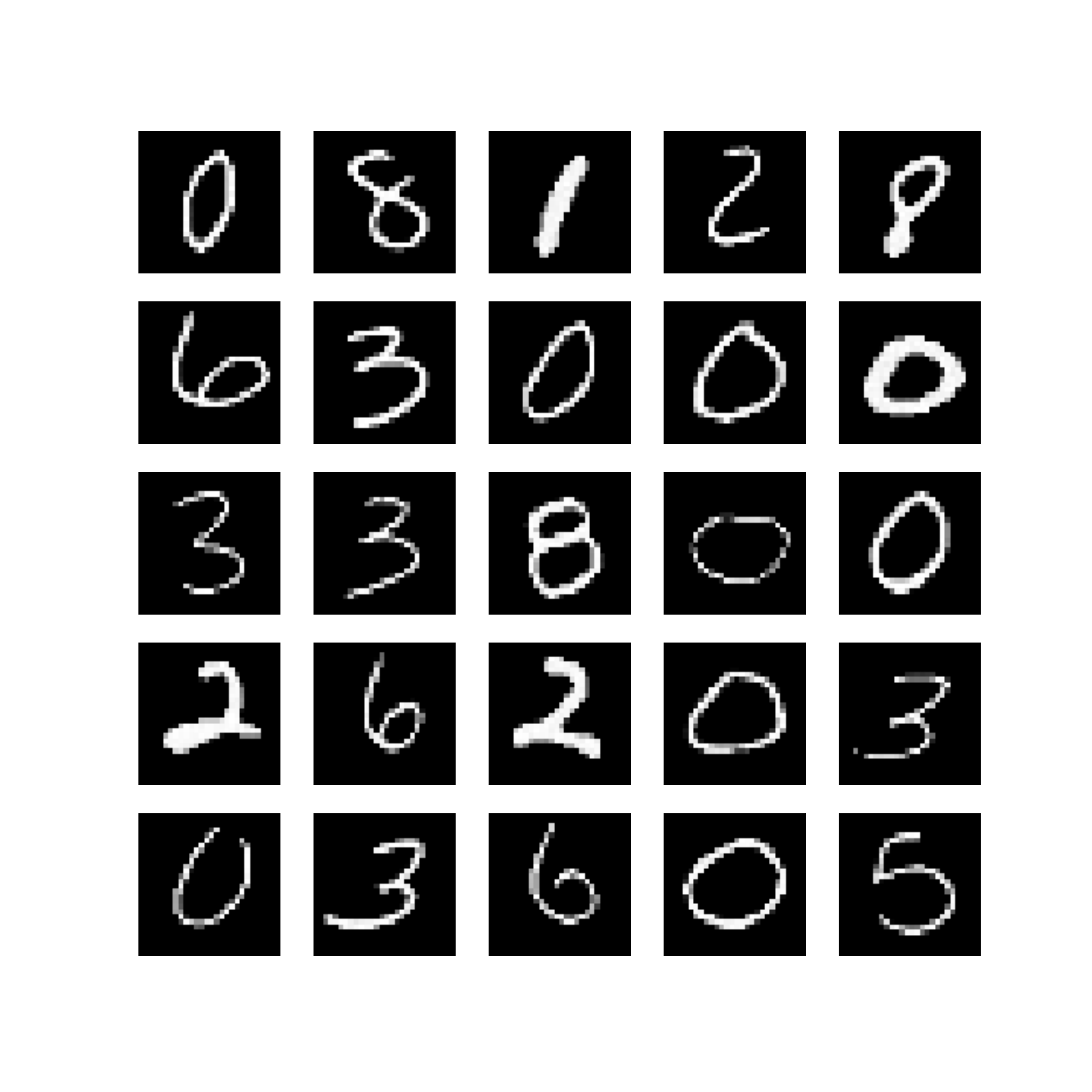}\label{fig:mnist_langevin}}
        \subfigure[DCGAN samples
    ]{
    \includegraphics[width=0.4\textwidth]{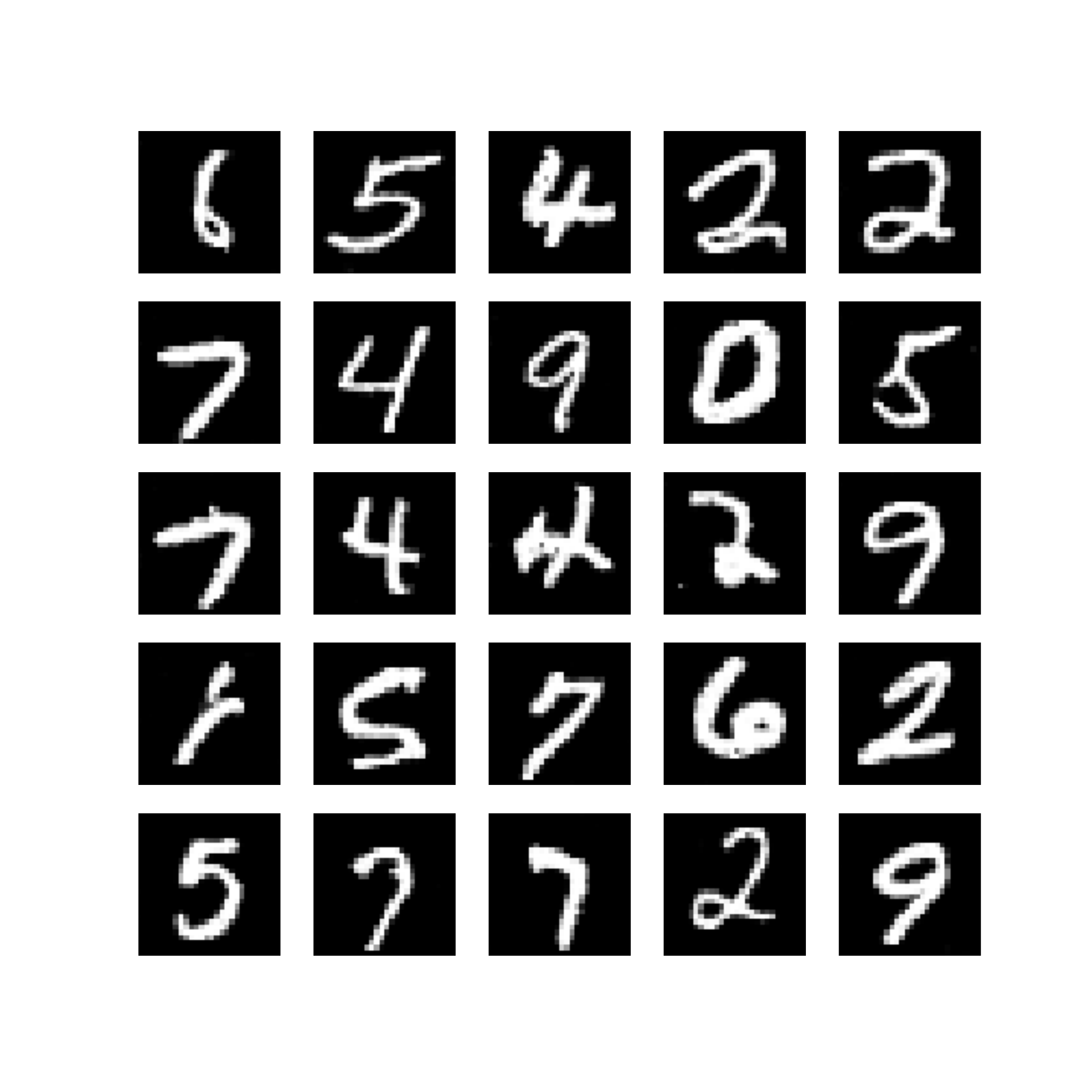}}\label{fig:mnist_dcgan}    
    
    \subfigure[GAN samples]{
    \includegraphics[width=0.4\textwidth]{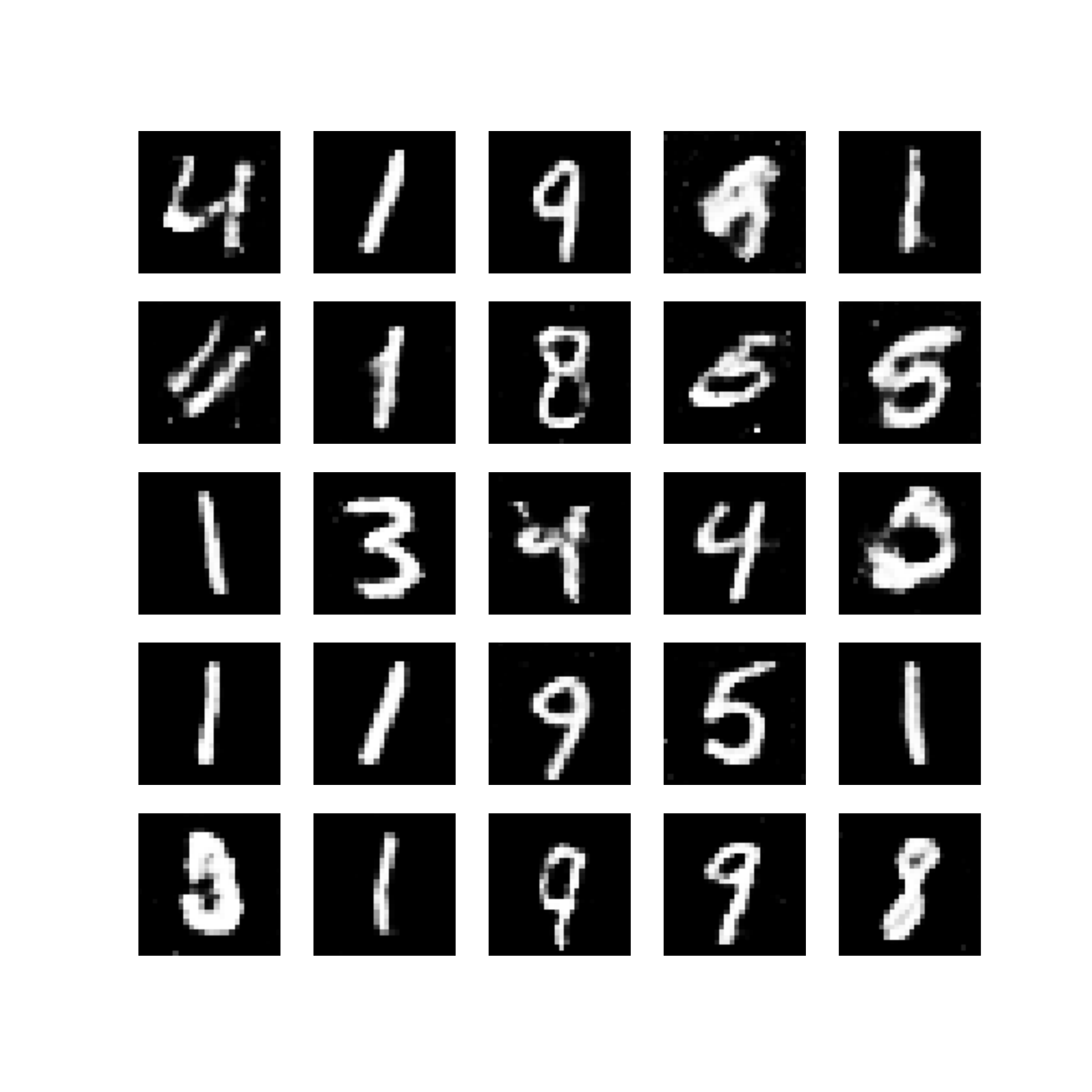}}\label{fig:mnist_gan}    \subfigure[VAE samples
    ]{
    \includegraphics[width=0.4\textwidth]{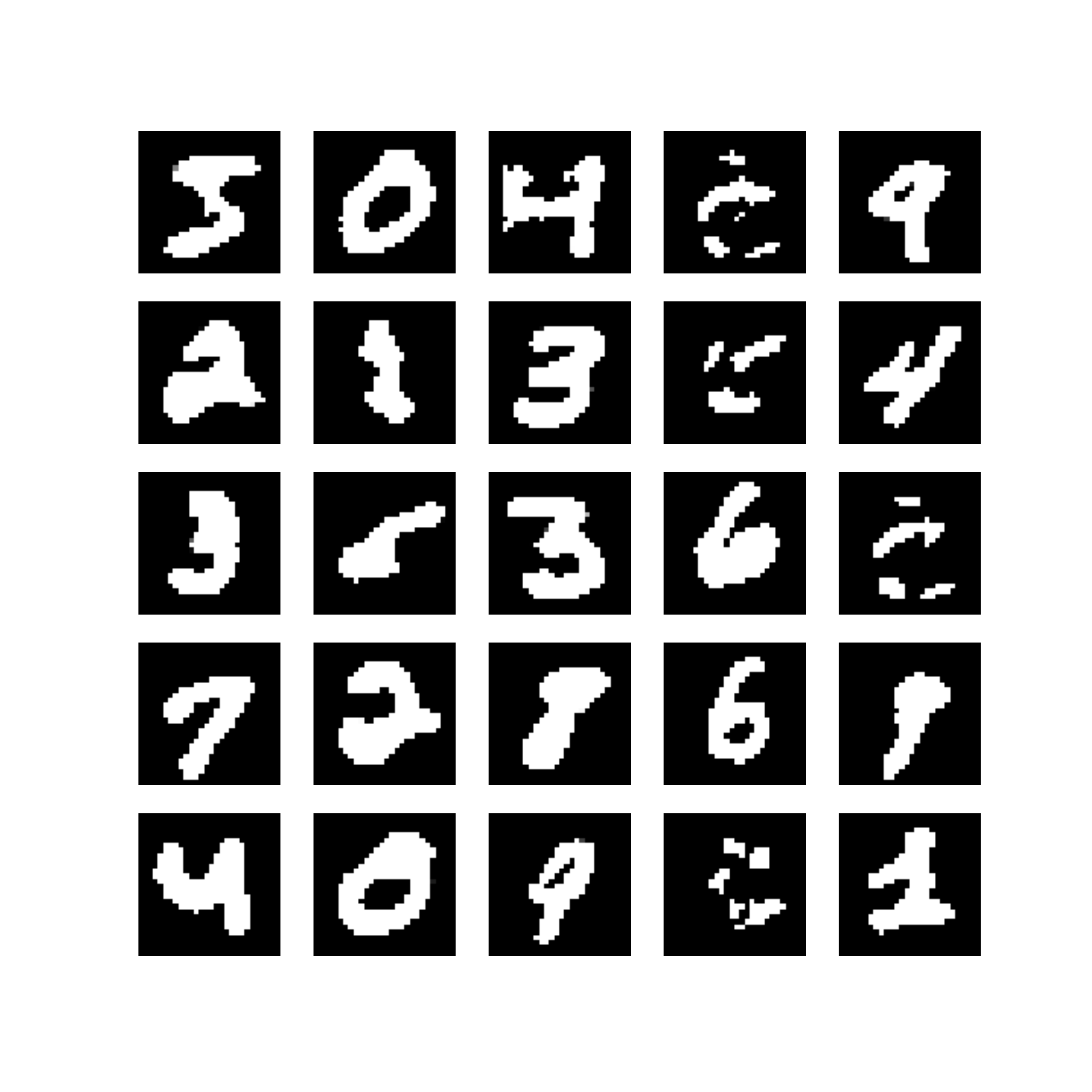}}\label{fig:mnist_vae}
    
    \caption{MNIST samples 
}
    \label{fig:exp_mnist}
\end{figure}

\begin{figure}[t!]
\centering
\hspace{-.2cm}
    \subfigure[Real samples]{
    \includegraphics[width=0.52\textwidth]{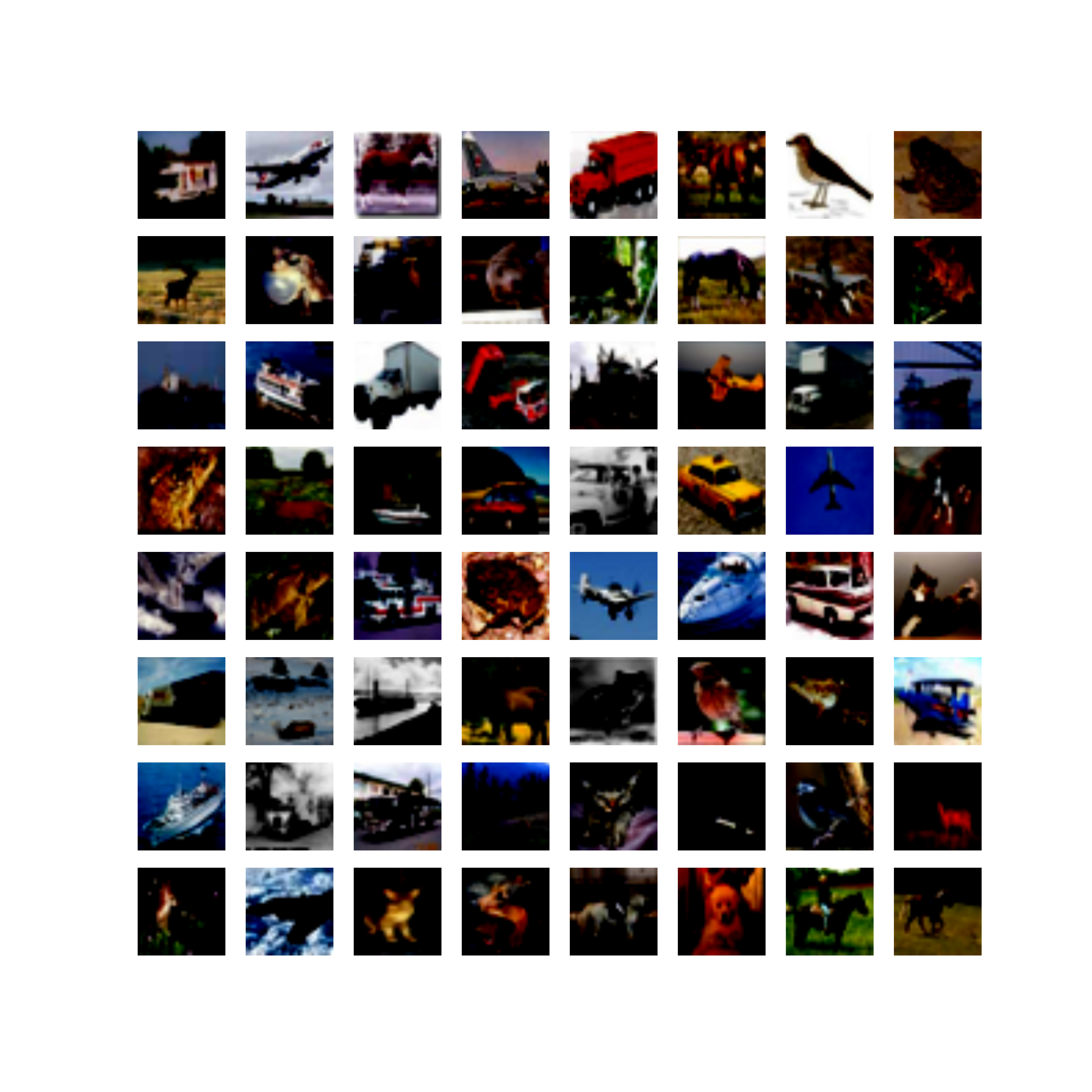}}\label{fig:cifar10_real}    \hspace{-.9cm}
        \subfigure[DCGAN samples
    ]{
    \includegraphics[width=0.52\textwidth]{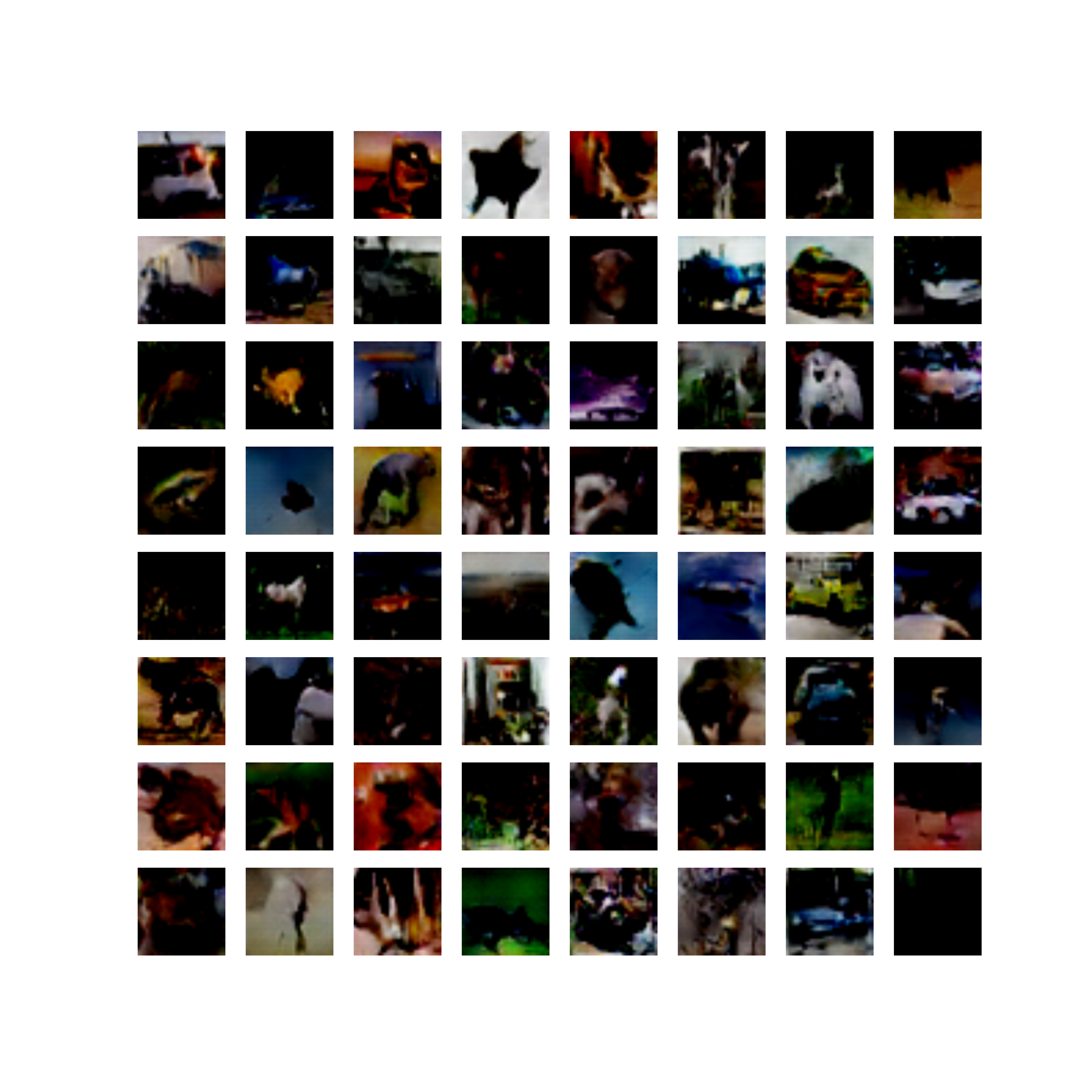}}\label{fig:cifar10_dcgan} 
    
\hspace{-.2cm}        \subfigure[CIFAR10.1 samples
    ]{
    \includegraphics[width=0.52\textwidth]{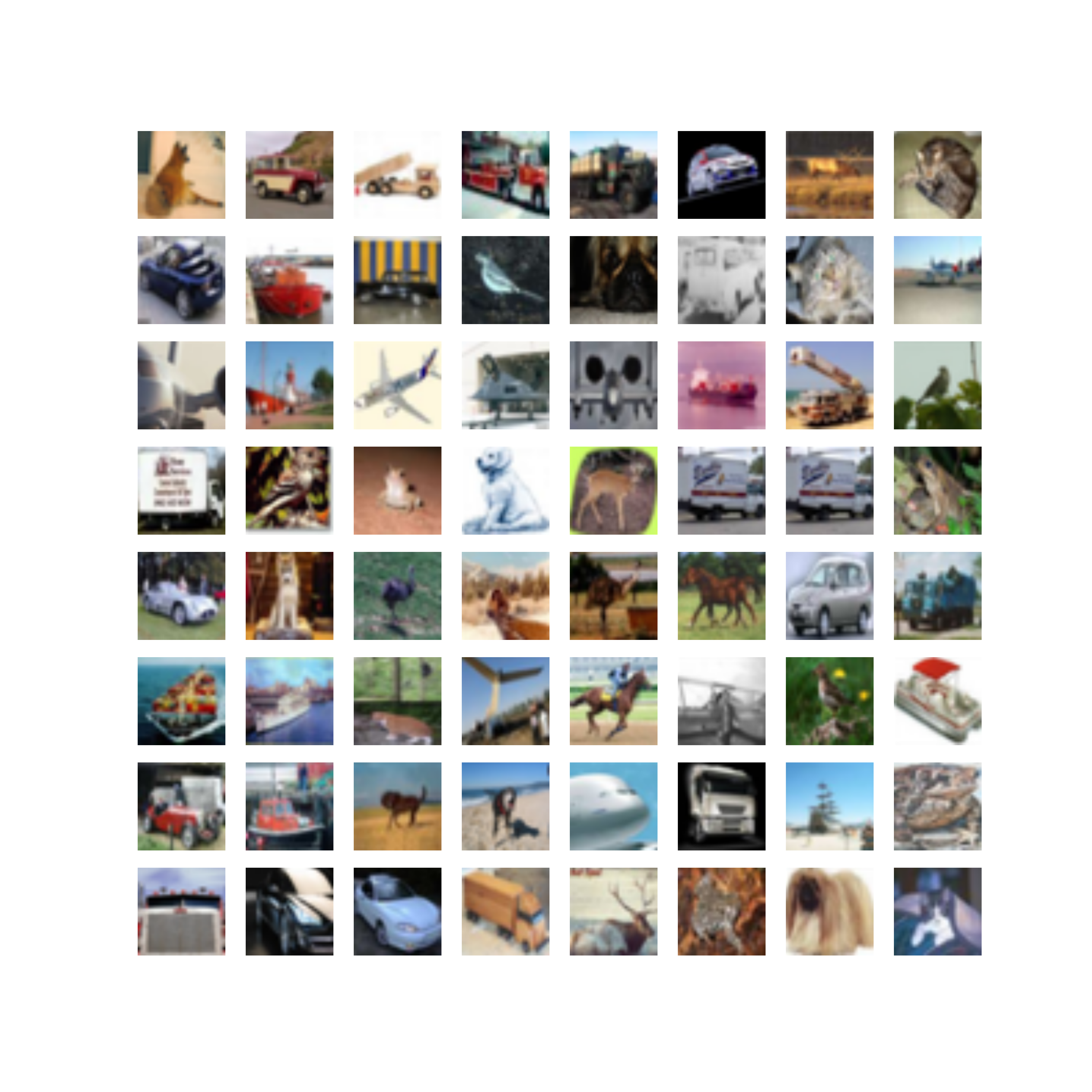}}\label{fig:cifar101}   
\hspace{-.9cm}\subfigure[NCSN samples]{
    \includegraphics[width=0.52\textwidth]{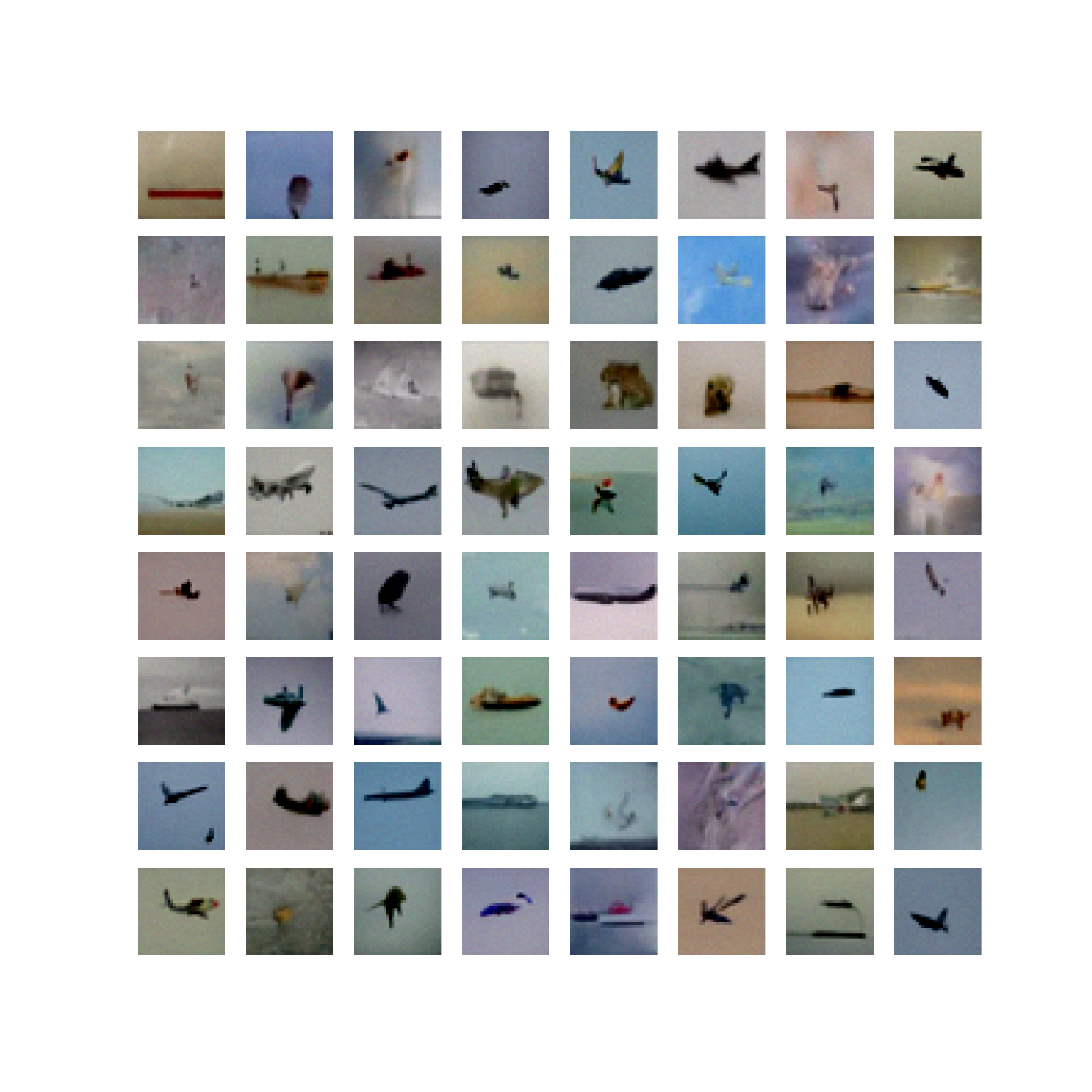}}\label{fig:cifar10_ncsn}

    \caption{CIFAR10 samples
}
   \label{fig:cifar10}
\end{figure}

 \newpage 

\section{Equivalence to the multivariate score-Stein operator}\label{app:three_approaches}

{
Here we show that the operator in Eq.\,\ref{summ2} is equivalent to the corresponding multivariate score-Stein operator in Eq.\,\ref{eq:steinRd}, when they exist;  the difference being the factor $\frac1m$. Recall the set-up for score-Stein operators.
Let {$q$} with smooth support $\Omega_q$ be differentiable. The {score function} of $q$ is the function 
\begin{align*}
  \s_q = \mathcal{T}_{\nabla, q}1 = \grad \log q = \frac{\nabla q}{q}
\end{align*}
(with the convention that $
\s_q \equiv 0$ outside of $\Omega_q$).
The {score-Stein operator} is the vector-valued operator
\begin{equation}
\label{eq:1score}
      \mathcal{A}_q = \nabla   +  
      \s_q \mathrm{I}_{{m}}
    \end{equation}
    acting on differentiable functions $g : \R^{m} \to \R$, {with $\mathrm{I}_{m}$ denoting the $m \times m$ identity matrix}.}
    
    \begin{proposition}\label{prop:equal} 
    When they exist, then the operators in Eq.\,\ref{summ2} and in  Eq.\,\ref{eq:steinRd} differ only by a factor $\frac1m$.
    \end{proposition}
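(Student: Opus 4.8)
The plan is to compute both operators explicitly on an arbitrary differentiable test function $g:\R^m\to\R$ and check that they agree up to the stated factor $\frac1m$. First I would recall that the multivariate score-Stein operator of Eq.\,\ref{eq:steinRd}, applied to a scalar function $g$ rather than a vector field (equivalently, applied componentwise as in Eq.\,\ref{eq:1score}), produces the vector whose $i$-th component is
\[
\bigl(\mathcal{A}_q g\bigr)^{(i)}(\x) \;=\; \frac{\partial g}{\partial x^{(i)}}(\x) \;+\; g(\x)\,\frac{\partial}{\partial x^{(i)}}\log q(\x).
\]
The key observation is that the conditional density $q_{x^{(-i)}}(x)$ of the $i$-th coordinate given the others is proportional (in $x$, with the other coordinates fixed) to $q(x^{(1)},\dots,x^{(i-1)},x,x^{(i+1)},\dots,x^{(m)})$, the normalising constant $q(x^{(-i)})=\int q(x^{(1)},\dots,x,\dots,x^{(m)})\,dx$ not depending on $x$. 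Hence the uni-variate score function of the conditional satisfies
\[
s^{(i)}_{x^{(-i)}}(x) \;=\; \frac{\partial}{\partial x}\log q_{x^{(-i)}}(x) \;=\; \frac{\partial}{\partial x^{(i)}}\log q(\x)\Big|_{x^{(i)}=x},
\]
i.e. the conditional score is literally the $i$-th partial derivative of $\log q$.

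Next I would substitute this into the definition of $\mathcal{T}^{(i)}_q$ from Eq.\,\ref{opform}. Writing $g_{x^{(-i)}}$ for the uni-variate restriction of $g$ as in the excerpt, we have $g_{x^{(-i)}}'(x^{(i)}) = \tfrac{\partial g}{\partial x^{(i)}}(\x)$, so
\[
\mathcal{T}^{(i)}_q g_{x^{(-i)}}(x^{(i)}) \;=\; \frac{\partial g}{\partial x^{(i)}}(\x) \;+\; g(\x)\,\frac{\partial}{\partial x^{(i)}}\log q(\x) \;=\; \bigl(\mathcal{A}_q g\bigr)^{(i)}(\x).
\]
Summing over $i$ and dividing by $m$, the left-hand side is exactly the operator of Eq.\,\ref{summ2}, while the right-hand side is $\frac1m$ times the vector $\mathcal{A}_q g$ of Eq.\,\ref{eq:steinRd} (or, under the contraction convention $\nabla\cdot\f$ for a vector field $\f$, one recovers the divergence form after taking the appropriate inner product). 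This establishes the claim. I would close by noting the existence caveat: the identity is purely formal unless $q$ is differentiable with the score functions well-defined, and unless $g$ lies in the canonical Stein class so that the Stein identity $\E_q[\mathcal{A}_q g]=0$ holds; under those hypotheses the two operators coincide up to $\frac1m$.

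The main obstacle, such as it is, is bookkeeping rather than depth: one must be careful about the distinction between the score-Stein operator acting on a \emph{vector field} $\f=(f_1,\dots,f_m)^\top$ (Eq.\,\ref{eq:steinRd}, which contributes $\nabla\cdot\f$) versus acting on a \emph{scalar} function $g$ as in Eq.\,\ref{eq:1score} (which contributes the vector $\nabla g$), and to match the excerpt's conventions consistently. The substantive content — that conditioning on $x^{(-i)}$ replaces the $i$-th partial of $\log q$ by the conditional score, and that these are the same object because the conditional normaliser is $x^{(i)}$-free — is a one-line fact, so the proof is essentially a direct verification.
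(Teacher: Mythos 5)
Your proposal is correct and follows essentially the same route as the paper: rewrite the score-Stein operator on a scalar $g$ as the sum $\sum_i \{\partial_i g + g\,\partial_i \log q\}$, then use the factorisation $q(\x) = q(x^{(i)}\mid x^{(-i)})\,q(x^{(-i)})$ (equivalently, your observation that the conditional normaliser is free of $x^{(i)}$) to identify $\partial_i \log q$ with the uni-variate conditional score, so that averaging over $i$ yields the factor $\frac1m$. The only difference is presentational — you spell out the scalar-versus-vector-field bookkeeping a bit more explicitly than the paper does — but the substance is the same.
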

    
    {\bf Proof.}
    Writing $\partial_i$ for the derivative in direction $x^{(i)}$, the score operator acting on differentiable functions $g: \R^m \rightarrow \R$ can be written as 
   \begin{equation}
\label{eq:1score2}
      \mathcal{A}_p g (x) = \sum_{i=1}^m \left\{ \partial_i g(x)  +   g(x) \partial_i (\log q(x)) \right\} .
    \end{equation} 
    Now, for $i \in [m],$
    $$ q(x) = q(x^{(i)} | x^{(j)}, j \ne i) q (  x^{(j)}, j \ne i)$$
    and hence 
    $$ \partial_i (\log q(x))  =
    \partial_i \log q(x^{(i)} | x^{(j)}, j \ne i) .$$ 
    The assertion follows. \hfill $\Box$ 

\begin{example}[Bi-variate Gaussian]\label{ex:bivariate-gauss} Consider $x=(x^{(1)}, x^{(2)})^{\top}\in \R^2$, i.e. $m=2$ and $x\sim \mathcal N(\mu, \Sigma)$ where $\mu=(\mu^{(1)}, \mu^{(2)})^{\top} \in \R^2$, $\Sigma = \begin{pmatrix}
1 & \sigma \\
\sigma & 1
\end{pmatrix}
$. 
With the corresponding precision matrix $\Sigma^{-1}=\frac{1}{1-\sigma^2}\begin{pmatrix}
1 & -\sigma \\
-\sigma & 1
\end{pmatrix}$, it is easy to check $Q^{(1)}(X^{(1)} | X^{(2)} = x^{(2)})\sim \mathcal{N}(\mu^{(1)} + \sigma(x^{(2)} - \mu^{(2)}), 
1-\sigma^2)$. 
For a bi-variate {differentiable} test function $g: \R^2 \to \R$, applying the Stein operator of the form in \cref{opform},
\begin{eqnarray*} 
\A g(x^{(1)}, x^{(2)}) &=& \frac12
\left\{ {\mathcal{T}}^{(1)} g_{x^{(2)
}} (x^{(1)}) + {\mathcal{T}}^{(2)} g_{x^{(1)
}} (x^{(2)}) \right\} \\
&=& \frac12\left\{  (g_{x^{(2)}})'(x^{(1)}) - \frac{x^{(1)}-\mu^{(1)} - \sigma(x^{(2)}-\mu^{(2)})}{1-\sigma^2} g_{x^{(2)}}(x^{(1)}) \right. \\
&& \left. 
+ (g_{x^{(1)}})'(x^{(2)}) - \frac{x^{(2)}-\mu^{(2)} - \sigma(x^{(1)}-\mu^{(1)})}{1-\sigma^2} g_{x^{(1)}}(x^{(2)})
\right\} \\
&=&
\frac12\left\{ \partial_1 g(x^{(1)}, x^{(2)}) - \frac{x^{(1)}-\mu^{(1)} - \sigma(x^{(2)}-\mu^{(2)})}{1-\sigma^2} g_{x^{(2)}}(x^{(1)}) \right. \\
&& \left. 
+ \partial_2 g(x^{(1)}, x^{(2)}) - \frac{x^{(2)}-\mu^{(2)} - \sigma(x^{(1)}-\mu^{(1)})}{1-\sigma^2} g_{x^{(1)}}(x^{(2)})
\right\} \\
&=& \frac12 \left\{ \nabla \times g(x^{(1)}, x^{(2)}) - \Sigma^{-1} ( x^{(1)}-\mu^{(1)}, x^{(2)} - \mu^{(2)})^T g(x^{(1)}, x^{(2)}) \right\} 
\end{eqnarray*} 
where $\partial_i$ denotes the derivative with respect to $x^{(i)}$. Thus, we recover the score operator given in  Eq.\,\ref{eq:steinRd}. 
\end{example} 

\section{Energy-based models and score matching}\label{app:sm_obj}

Energy-based models (EBM{s}) \citep{lecun2006tutorial} have 
been {used} 
in machine learning contexts for modelling and learning deep generative models. An EBM is essentially a Gibbs measure with energy function $E(x)$,
\begin{equation}\label{eq:ebm}
    q(x) = \frac{1}{Z} \exp\{-E(x)\},
\end{equation}
where $Z$ is the (generally) intractable normalisation constant (or partition function). {In particular, learning} 
and training complicated EBM has been studied in machine learning \citep{song2021train}. One of the most popular and relatively stable training objective is the score-matching (SM) objective {given in Eq.\,(\ref{eq:sm_objective})} \citep{hyvarinen2005estimation},
\begin{equation*}
    J(p\|q) = \E_p \left[\left(\log p(x)' - \log q(x)'\right)^2\right],
\end{equation*}
which is particularly useful for the unnormalised models such as EBM{s}.
%

For {an} EBM, {the} SM objective only requires computing $\nabla E(x)$ and $\nabla \cdot \nabla E(x)$ (or $\Delta E(x)$), which is independent of {the} partition function $Z$. 
Moreover, by  learning the SM objective, we can obtain $\nabla \log q(x)$ directly, to construct the approximate Stein operator.

\section{More on kernel-based hypothesis tests}\label{app:mmd}

\subsection{Maximum-mean-discrepancy tests }

\paragraph{Maximum-mean-discrepancy}(MMD) has been introduced as a kernel-based method to tackle two-sample problem{s} \citep{gretton2007kernel}, utilising the rich representation of the functions {in a} reproducing kernel Hilbert space (RKHS) via {a} kernel mean embedding. Let $k: \X \times \X \to \R$ be the kernel associated with RKHS $\H$. 
The kernel \emph{mean embedding} of a distribution $p$ induced by $k$ is defined as 
\begin{equation}\label{eq:mean_embedding}
    \mu_p:= \E_{x\sim p}[k(x,\cdot)] \in \H,
\end{equation}
whenever $\mu_p$ exist. The kernel mean embedding in Eq.\ref{eq:mean_embedding} can be estimated empirically from independent and identically distributed (i.i.d.) samples. Given $x_1,\dots,x_n \sim p$:
\begin{equation}\label{eq:mean_embedding_estimate}
    \widehat{\mu}_p: = \frac{1}{n} \sum_{i=1}^n k(x_i, \cdot)
\end{equation}
replacing $p$ by its empirical counterpart $\widehat{p}=\frac{1}{n} \sum_{i=1}^n \delta_{x_i}$ where $\mathcal{\delta}_{x_i}$ denotes the Dirac measure at $x_i\in \X$. {For i.i.d.\, samples, the}  empirical mean embedding {$\widehat{\mu}_p$} is a $\sqrt{n}$-consistent estimator for $\mu_p$ in RKHS norm 
\citep{tolstikhin2017minimax}, 
{and with $n$ denoting the number of samples,} $\|\mu_p - \widehat{\mu}_p\|_{\H} = O_p(n^{-\frac{1}{2}})$. 
When the sample size $n$ is small, the estimation error {may not be negligible}.

The MMD between two distributions $p$ and $q$ is defined as
\begin{align}
    \operatorname{MMD}(p\| q; \H) 
    &= \sup_{\|{f} \|_{\mathcal{H}} \leq 1} \E_{x\sim p}[f(x)] - \E_{{\tilde x}\sim q}[f({\tilde x})]
    \nonumber\\
    & = \sup_{\|{f} \|_{\mathcal{H}} \leq 1} \left\langle f, \mu_p - \mu_q \right\rangle_{\H} 
     = \| \mu_p - \mu_q  \|_{\H}.\label{eq:mmd_distance}
\end{align}
One desirable property for MMD {is to be able} to distinguish distributions {in the sense that}
$\operatorname{MMD}(p\|q;\H)=0 \Longleftrightarrow p = q$\footnote{Note that MMD is symmetric with respect to $p$, $q$, while KSD is not symmetric with respect to $p$, $q$.}.
{This property} can be achieved via {\it characteristic kernels} \citep{sriperumbudur2011universality}. {It is often more convenient} 
to work with the squared version of MMD:
\begin{align}
    \operatorname{MMD}^2(p\|q;\H) &= \|\mu_p - \mu_q\|^2_{\H} =\left\langle \mu_p, \mu_p \right\rangle + \left\langle \mu_q, \mu_q \right\rangle -2\left\langle \mu_p, \mu_q \right\rangle 
    \nonumber\\
    & = \E_{x, \tilde x \sim p} k(x,\tilde x) +\E_{y, \tilde y \sim q} k(y,\tilde y) - \E_{x\sim p,y\sim q} k(x,y).
\label{eq:mmd2}
\end{align}

Given two sets of 
i.i.d.\,samples 
$\mathbb S_p = \{x_1,\dots,x_n\} \overset{i.i.d.}{\sim} p$ and $\mathbb S_q = \{y_1,\dots,y_l\} \overset{i.i.d.}{\sim} q$,
an unbiased estimator of Eq.\ref{eq:mmd2}, based on the empirical estimate of kernel mean embedding in Eq.\ref{eq:mean_embedding_estimate}, is given by
\begin{align}\label{eq:mmd_u}
{\MMD^2_u}(\mathbb S_p\|\mathbb S_q; \H) 
= \frac{1}{\small n(n-1)}\sum_{i\neq i'}k(x_i,x_{i'}) + \frac{1}{l(l-1)}\sum_{j \neq j'}k(y_{j},y_{j'}) - \frac{2}{nl}\sum_{i j}k(x_i,y_{j}).
\end{align}

{\bf{A two-sample test (or two-sample problem)}} aims to test the null hypothesis $H_0: p = q$ 
against the alternative hypothesis $H_1 : p \neq q$. 
It has been shown that the \textit{asymptotic} distribution of $n$-scaled statistic 
$n\cdot \MMD_u^2(\mathbb S_p\| \mathbb S_q; \H)$ under the null ($p=q$) 
{is that of an} infinite weighted sum of $\chi^2$-distribution \citep[Theorem 12]{gretton2012kernel}, 
{while} under the alternative ($p\neq q$),
the $\sqrt{n}$-scaled statistic
$\sqrt{n}\cdot \MMD_u^2(\mathbb S_p\|\mathbb S_q; \H)$ is asymptotically normally distributed with the mean centered at $\MMD(p\|q; \H)>0$. 
{Thus,} $n\cdot\MMD_u^2(\mathbb S_p\| \mathbb S_q; \H)$ is 
{taken} as a test statistic to be compared against the \emph{rejection threshold}. If the test statistic exceeds the rejection threshold, the empirical estimation of the MMD statistic is thought to exhibit significant departure from the null hypothesis so that $H_0$ is rejected. 

As the null distribution is given by {an} infinite weighted sum of $\chi^2$ random variables which does not have {a} closed form expression, the null distribution can be simulated via a permutation procedure \citep{gretton2008kernel}: Combine and order two sets of samples as $z_i = x_i, i\in[n]$ and $z_{j} = y_{j-n}, n+1\leq j \leq n+l$. {Let} 
$\mu: [n+l]\to [n+l]$ {be a permutation}, and {write} $z^{\mu} = \{z_{\mu_{(1)}},\dots z_{\mu_{(n+l)}}\}$. {Then}  $z^{\mu}$ is re-split into $\mathbb S_p^{\mu} = \{z_i\}_{1\leq i \leq n} $ and $\mathbb S_q^{\mu} = \{z_j\}_{n+1\leq j \leq n+l}$. The permuted MMD is computed via Eq.\,\ref{eq:mmd_u} as
\begin{equation}\label{eq:mmd_perm}
\MMD^2_u(z^{\mu}) = 
{\MMD^2_u}(\mathbb S^{\mu}_p\|\mathbb S^{\mu}_q; \H).    
\end{equation}
For $\mu_1,\dots,\mu_B$, we obtain $\MMD^2_u(z^{\mu_1}),\dots, \MMD^2_u(z^{\mu_B})$ 
{and use these values} 
to compute the empirical quantile of the test statistics $
{\MMD^2_u}(\mathbb S_p\|\mathbb S_q; \H)$.

To {test} whether the implicit generative model can generate samples following the same distribution as the observed sample, it is natural to consider the two-sample problem described above, {which tests whether} 
two sets of samples are 
from the same distribution. In the model assessment context, one set of samples (of size $N$) are generated from the implicit model, while the other set of samples (of size $n$) are observed. 

{The MMD test often assumes that the sample sizes $n$ and $l$ are equal; the asymptotic results including consistency are valid under the regime that $n, l \rightarrow \infty$} 
\citep{gretton2009fast, gretton2012optimal, jitkrittum2016interpretable};
{also the relative model comparisons in  \cite{jitkrittum2018informative} only considered the cases 
$n=l$.}
{In our setting, the sample size $l$ is usually denoted by $N$.} 
For our model assessment problem setting, when $n$ is fixed and $N\to \infty$ is allowed to be asymptotically large, i.e. $n \ll N$, {we find that} the type-I error may not controlled. 
{Hence it is not always the case that} MMD is able to pick up the distributional difference between two sets of samples under the null hypothesis. 
{A} simple experiment in \cref{tab:mmd_typeI_error} shows {an example in which} 
the type-I error is not controlled when $N$ is increasing. {Hence, MMD is \textit{ not} used as comparison for NP-KSD.} 

The high rejection rate of {the} MMD statistic, i.e. {the} high type-I error as $N$ increase, shown in \cref{tab:mmd_typeI_error} can be {
{heuristically explained} as follows.}
Let $\{x_1,\dots,x_n\}, \{\tilde x_1,\dots, \tilde x_N\} \overset{i.i.d.}{\sim} p$ where {the} two sets of samples are generated from the same distribution. {Let} $\widehat \mu_{p,n} = \frac{1}{n}\sum_{i \in [n]} k( x_i,\cdot)$,
and
$\widehat \mu_{p,N} = \frac{1}{N}\sum_{j \in [N]} k(\tilde x_j,\cdot)$. The empirical MMD between $\widehat \mu_{p,n}$ and $\widehat \mu_{p,N}$ can be seen as
\begin{equation}\label{eq:mmd_nN}
    \|\widehat \mu_{p,n} - \widehat \mu_{p,N}\|^2_{\H} =
    \|(\widehat \mu_{p,n} - \mu_p) - (\widehat \mu_{p,N}-\mu_p)\|^2_{\H},  
\end{equation}
where MMD {aims} to {detect} 
the 
{asymptotic equality of} 
$(\widehat \mu_{p,n} - \mu_p)$ and $(\widehat \mu_{p,N} - \mu_p)$. When $n$ is small and fixed, and $n\ll N$, the difference is non-trivial and a rich-enough kernel is able {detect this}
difference, {leading to MMD rejecting the null hypothesis although it is true.} 

\begin{table}[t!]
    \centering
\begin{tabular}{c|rrrrrr}
\toprule

Sample size {N} &  20 &  50 &  100 &  200 &  500 &  1000 \\
\midrule
{MMD} &   0.08 &   0.06 &    \textbf{0.36} &    \textbf{0.9} &    \textbf{1.00} &    \textbf{1.00} \\
{MMDAgg} &   0.06 &   0.07 &    0.02 &    0.03 &    0.02 &     0.05 \\
{KSD} &   0.07 &   0.04 &    0.04 &    0.02 &    0.08 &     0.06 \\
\bottomrule
\end{tabular}
\vspace{0.3cm}
    \caption{Type-I error with increasing sample size $N$. $H_0$ is the standard Gaussian with $m=3$; $n=50$; $\alpha=0.05$; $n_{sim} = 500$, $100$ trials for rejection rate. Bold values show the uncontrolled type-I error.}
    \label{tab:mmd_typeI_error}
    \end{table} 
\paragraph{{MMDAgg, a}  non-asymptotic MMD-based test}
{R}ecently, \cite{schrab2021mmd} proposed an aggregated MMD test that can 
incorporate the setting $n\neq N$ {as long as there exists a constant $C>0$ such that 
{$n \le N \le cn$}. 
Under this condition,}  
MMDAgg is a consistent non-asymptotic test with controlled type-I error, {see \cref{tab:mmd_typeI_error}}, which we use as competitor method in the main text. {In  \cref{tab:mmd_typeI_error} KSD is included as desired method when the underlying null distribution is known.} 

{The}  MMDAgg test statistic is computed by aggregating a set of MMD statistic based on different choices of kernel, e.g. Gaussian bandwidth.
MMDAgg 
{takes into account a number of choices of bandwidth}
$\lambda \in \Lambda$ {where $\Lambda$ is a finite set}. {Let} 
$\widehat{M}_{\lambda}$ {denote} the empirical MMD using {a} kernel with bandwidth $\lambda$.
Each $\lambda$ can be weighted, via $w_{\lambda}$, where $\sum_{\lambda \in \Lambda} w_{\lambda} = 1$. In \citet{schrab2021mmd} (as well as our implemented experiments),  uniform weights {are chosen}; 
$w_\lambda \equiv w = \frac{1}{|\Lambda|}$.
Denote {by} $B_1$  the number of samples {used} to  simulate {the} null distribution for quantile estimation \footnote{{This is the} same as notion $B$ in the main text as well as {in} 
Eq.\,\ref{eq:mmd_perm}.} {and denote by} $B_2$ 
the number of simulated samples {used} to estimate the empirical rejection probability.
Define ${\widehat q}_{\lambda, 1-\alpha}^{B_1}(z^{B_1})$
as the conditional empirical $(1-\alpha)$-quantile {when MMD uses a kernel with bandwidth $\lambda$,}
estimated from the permutation procedure with $B_1$ permutations using Eq.\ref{eq:mmd_perm}.
{Then for a fixed test level $\alpha$, u is estimated via the bi{-s}ection method such that} 
\begin{equation}\label{eq:emp_prob_mmdagg}
    \P\left(\max_{\lambda \in \Lambda} (\widehat M_\lambda - {\widehat q}_{\lambda, uw}^{B_1}(z^{B_1})) >0\right) 
    { \leq \alpha}. 
\end{equation}
{We} reject $H_0$ if for any $\lambda \in \Lambda$ and with the estimated $\widehat u$, we have that $\widehat M_\lambda$ exceeds the rejection probability in Eq.\,\ref{eq:emp_prob_mmdagg}; otherwise {we} do not reject $H_0$.  
In this way, MMDAgg 
{does not only achieve the desired} non-asymptotic type-I error but {is} 
able to explore a wide range of kernels  {in order} 
to produce stronger test power.



\subsection{Wild-bootstrap on KSD testing procedures}

The {wild} bootstrap procedure \citep{chwialkowski2014wild} simulates the null distribution via 
{so-called} wild-bootstrap samples.  
{For KSD,} \cite{chwialkowski2014wild}
has shown 
weak asymptotic convergence to the null distribution with deterministic and bounded kernels. 
For the NP-KSD test statistic, wild-bootstrap samples 
{do} not necessarily converge to the null distribution, due to the estimation difference $(\widehat s^t - s)$, creating a random Stein kernel for NP-KSD. 
{Perhaps therefore unsurprisingly,} the wild bootstrap procedure NP-KSD does not control the type-I error correctly.
Instead, we consider a Monte Carlo procedure to simulate the null distribution {of NP-KSD}. {While Monte Carlo estimation is more comput{ationally} intensive than wild-bootstrap, it is  an accurate method by design.}

\cref{fig:exp_bootstrap} {illustrates this point.  \cref{fig:gaussian_fit} shows samples from a Gaussian distribution. The true density is plotted in red. Two score matching density estimates, SM1 and SM2, are calculated; SM1 presents a good fit whereas SM2 is a less accurate estimate.
For KSD, which is applicable when the underlying null distribution is known, \cref{fig:null_ksd} shows that the Monte Carlo distribution and the wild-bootstrap distribution are close and reach the same conclusion for the KSD test statistic. Using the well-fitting SM1 score density estimate, \cref{fig:null1} gives the Monte Carlo distribution and the wild-bootstrap distribution. The wild-bootstrap distribution is close to the wild-bootstrap distribution for the KSD. In contrast, it 
 differs considerably from the Monte Carlo distribution and would reject the null hypothesis although it is true. \cref{fig:null2} shows a similar result for the not so well fitting estimate SM2. The wild-bootstrap distribution is now more spread out but the observed test statistic is still in the tail of this distribution, whereas it is close to the center of the Monte Carlo distribution. In the {synthetic experiments for MoG in the} main text, 
the model misspecification \textbf{NP-KSD\_G} falls under this setting. These plots illustrate that using wild-bootstrap samples in this scenario could lead to erroneous conclusions. Hence we use Monte Carlo samples.}

\begin{figure}[t!]
    \centering
\subfigure[Samples and fitted densities
    ]{
    \includegraphics[width=0.4612\textwidth]{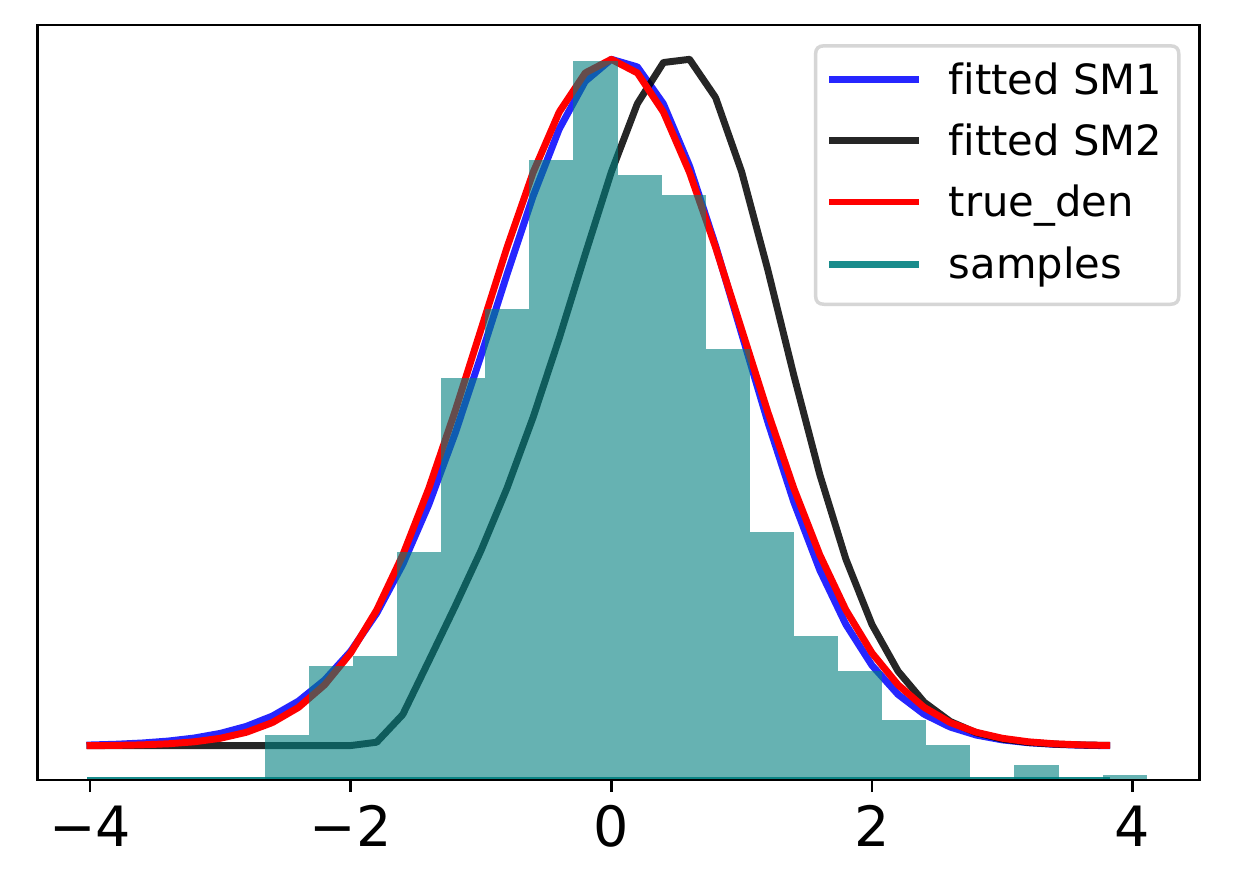}\label{fig:gaussian_fit}}
    
    \vspace{0.5cm}
    \subfigure[Simulated null distributions from KSD
    ]{
    \includegraphics[width=0.46\textwidth]{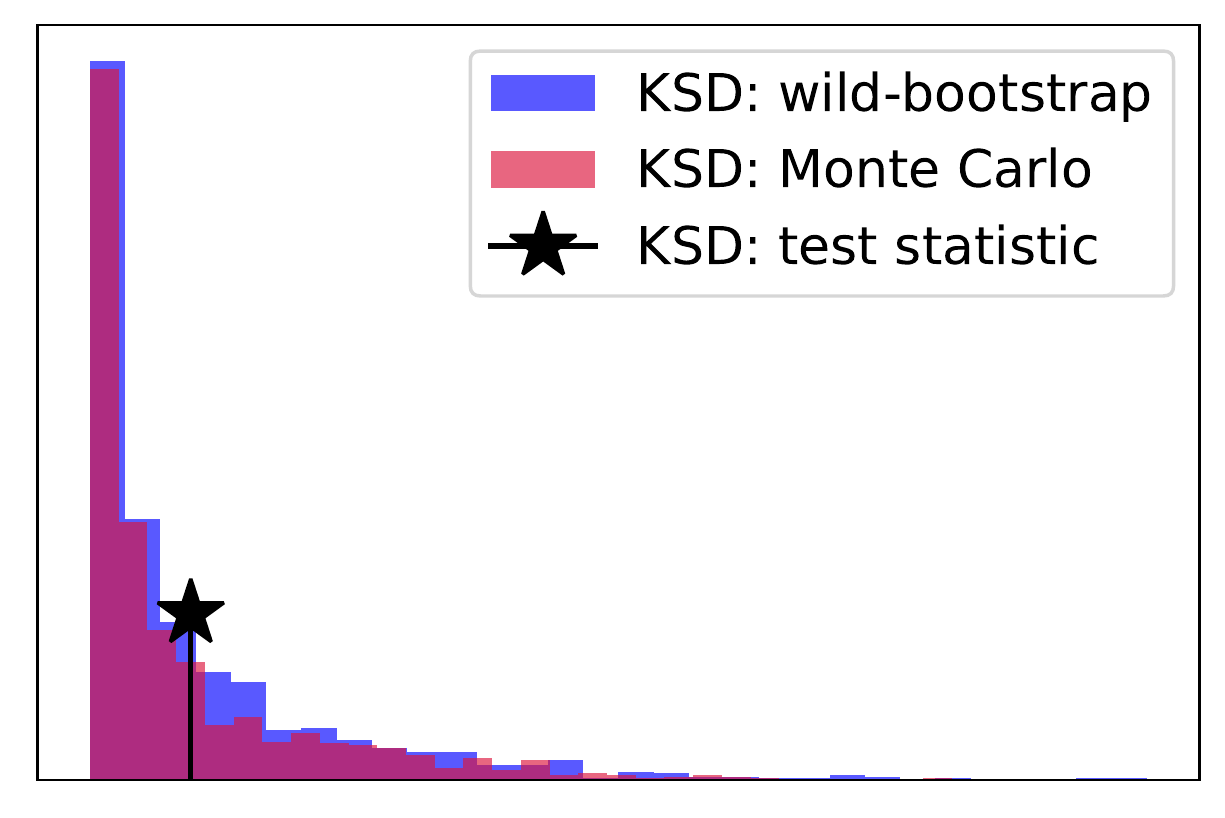}\label{fig:null_ksd}}

    \vspace{0.5cm}
    \subfigure[Simulated null distributions from fitted SM1
    ]{
    \includegraphics[width=0.46\textwidth]{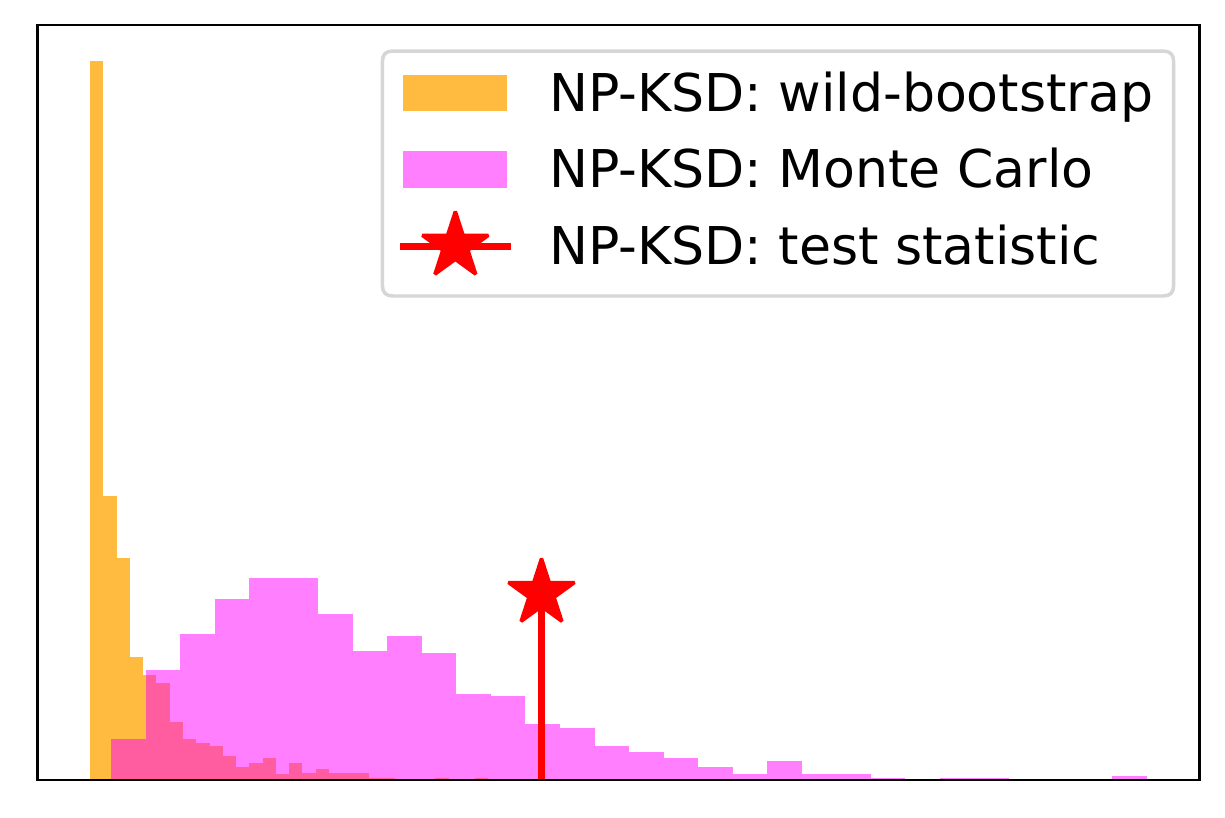}\label{fig:null1}}
        \subfigure[Simulated null distributions from fit SM2
    ]{
    \includegraphics[width=0.46\textwidth]{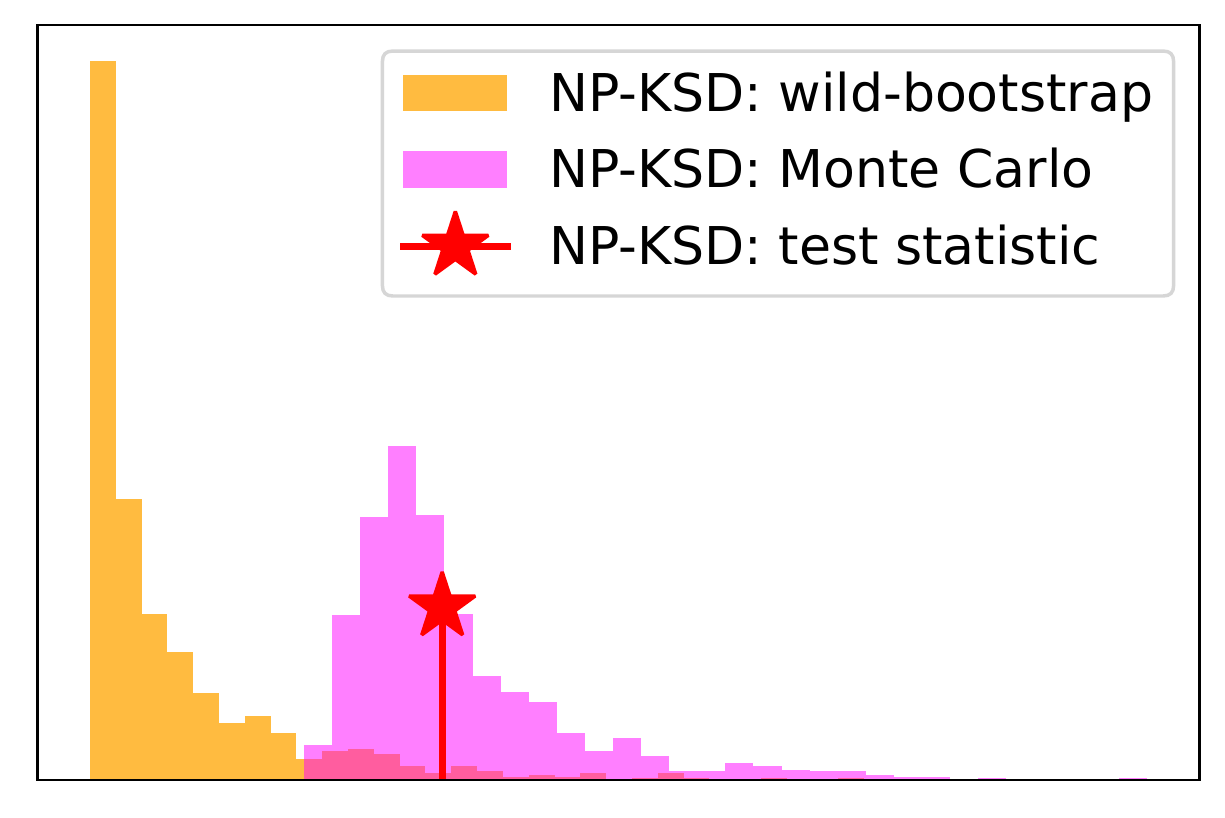}\label{fig:null2}}
    
\vspace{0.5cm}
    \caption{Visualisation for NP-KSD and KSD testing procedures. {For KSD, the wild-bootstrap distribution roughly agrees with the Monte Carlo distribution, whereas for NP-KSD, the wild-bootstrap distribution deviates strongly from the Monte Carlo distribution, indicating a danger of reaching an erroneous conclusion when using wild-bootstrap samples in this scenario.} 
}
    \label{fig:exp_bootstrap}
\end{figure}

\end{document}